\title{Variance-Reduced Off-Policy Memory-Efficient Policy Search}
\author{
  Daoming Lyu\\
  Auburn University\\
  \texttt{daoming.lyu@auburn.edu} \\
  \and
  Qi Qi\\
  University of Iowa\\
  \texttt{qi-qi@uiowa.edu} \\
  \and
  Mohammad Ghavamzadeh\\
  Facebook AI Research\\
  \texttt{mohammad.ghavamzadeh@inria.fr} \\
  \and
  Hengshuai Yao\\
  Huawei Inc.\\
  \texttt{hengshuai.yao@huawei.com} \\
   \and
  Tianbao Yang\\
  University of Iowa\\
  \texttt{tianbao-yang@uiowa.edu} \\
  \and
  Bo Liu\\
  Auburn University\\
  \texttt{boliu@auburn.edu} \\
}
\newcommand{\E}{\mathbb{E}}
\newcounter{RomanNumber}
\newcommand*\circled[1]{\tikz[baseline=(char.base)]{
            \node[shape=circle,draw,inner sep=1pt] (char) {#1};}}
\definecolor{darkblue}{HTML}{000080}
\newcounter{thm_counter}
\newcounter{lem_counter}
\newcounter{ass_counter}
\newcounter{rmk_counter}
\newtheorem{theorem}[thm_counter]{Theorem}
\newtheorem{lemma}[lem_counter]{Lemma}
\newtheorem{assumption}[ass_counter]{Assumption}
\newtheorem{remark}[rmk_counter]{Remark}
\begin{document}
  
\maketitle
\setcounter{footnote}{0}

\begin{abstract}
Off-policy policy optimization is a challenging problem in reinforcement learning (RL). The algorithms designed for this problem often suffer from high variance in their estimators, which results in poor sample efficiency, and have issues with convergence. A few variance-reduced on-policy policy gradient algorithms have been recently proposed that use methods from stochastic optimization to reduce the variance of the gradient estimate in the REINFORCE algorithm. However, these algorithms are not designed for the off-policy setting and are memory-inefficient, since they need to collect and store a large ``reference'' batch of samples from time to time. To achieve variance-reduced off-policy-stable policy optimization, we propose an algorithm family that is memory-efficient, stochastically variance-reduced, and capable of learning from off-policy samples. Empirical studies validate the effectiveness of the proposed approaches.
\end{abstract}

\section{Introduction}
\textit{Off-policy control and policy search} is a ubiquitous problem in real-world applications wherein the goal of the agent is to learn a near-optimal policy $\pi$ (that is, close to the optimal policy $\pi^*$) from samples collected via a (non-optimal) behavior policy $\mu$. Off-policy policy search is important because it can learn from previously collected data generated by non-optimal policies, such as from demonstrations (LfD)~\citep{dqn-lfd:2018}, from experience replay~\citep{mnih2015human}, or from executing an exploratory (even randomized) behavior policy. It also enables learning multiple tasks in parallel through a single sensory interaction with the environment~\citep{sutton2011horde}. However, research into efficient off-policy policy search has encountered two major challenges: off-policy stability and high variance. There is work in each direction, e.g., addressing off-policy stability~\citep{imani2018off,geoffpac} via emphatic weighting~\citep{sutton2016emphatic,hallak2016generalized}, and reducing the high variance caused by ``curse of horizon''~\citep{liu2018breaking,xie2019margin}, but little that addresses both challenges at the same time.

\emph{Stochastic variance reduction} has recently emerged as a strong alternative to stochastic gradient descent (SGD) in finding first-order critical points in non-convex optimization. The key idea is to replace the stochastic gradient (used by vanilla SGD techniques) with a ``\textit{semi-stochastic}'' gradient for objective functions with a finite-sum structure. A semi-stochastic gradient combines the stochastic
gradient in the current iterate with a snapshot of an earlier iterate, called the {\em reference iterate}. This line of research includes methods such as SVRG~\citep{svrg,zhang2013linear}, SAGA~\citep{saga}, SARAH~\citep{sarah}, and SPIDER~\citep{spider}.
A common feature of these techniques is storing a ``reference'' sample set in memory to estimate the gradient at a ``checkpoint,'' and then using it in updates across different training epochs. The reference set is usually very large---$O(n)$ in SVRG, for example, where $n$ is the size of the training data. This is a significant obstacle limiting the application of these variance-reduction techniques in deep learning. 
There has been a recent surge in research applying these ``semi-stochastic'' 
gradient methods to policy search to help reduce variance~\citep{svrpg,xu2019improved,sarahpg}, for example.
However, there are two drawbacks with these algorithms. The first is that a large ``reference" sample set must be stored, which is memory costly. The second is that these algorithms lack off-policy guarantees because they adopt the REINFORCE ~\citep{williams1992simple} algorithm as the policy search subroutine, which is only on-policy stable.  

In this paper, we aim to address the memory-efficiency and off-policy stability issues of existing stochastic variance-reduced policy search methods, we propose a novel \textit{variance-reduced off-policy} policy search algorithm that is both {\em convergent} and {\em memory efficient}.
To this end, we introduce novel ingredients, i.e., STOchastic
Recursive Momentum (STORM)~\citep{storm}, Actor-Critic with Emphatic weightings (ACE)~\citep{imani2018off}, and Generalized Off-Policy Actor-Critic (GeoffPAC)~\citep{geoffpac}.  
Combining the novel components of ACE/GeoffPAC and STORM offers a number of advantages. 
First, \textit{ACE and GeoffPAC are off-policy stable.}
We choose ACE and GeoffPAC especially they are the only two stable off-policy policy gradient approaches to the best of our knowledge.\footnote{\citet{degris2012off} proposed the OffPAC algorithm with certain theoretical incoherence, which was then fixed in~\citep{imani2018off}.} 
Second, \textit{STORM is memory-efficient}.
STORM is so far the only stochastic variance-reduced algorithm that need not revisit a ``fixed'' batch of samples. 
Based on these key ingredients, we propose the \textbf{V}ariance-reduced \textbf{O}ff-policy \textbf{M}emory-efficient \textbf{P}olicy 
\textbf{S}earch (VOMPS) algorithm and the  \textbf{A}ctor-\textbf{C}ritic with \textbf{E}mphatic weighting and \textbf{STO}chastic \textbf{R}ecursive \textbf{M}omentum (ACE-STORM) algorithm,
with two primary contributions: 
\textbf{(1)} VOMPS and ACE-STORM are both off-policy stable. Previous approaches are on-policy by nature (by adopting REINFORCE as their policy search component), and thus cannot be applied to the off-policy setting. To the best of our knowledge, VOMPS and ACE-STORM are the \textit{first} off-policy variance-reduced policy gradient methods. \textbf{(2)} The two algorithms are memory-efficient. Unlike previous approaches that must store a large number of samples for reference-point computation, our new algorithms do not need a reference sample set and are thus memory efficient.

Here is a roadmap to the rest of the paper. Sec.~\ref{sec:prelim} of this paper follows by introducing background on stochastic variance reduction. 
Sec.~\ref{sec:alg} develops the algorithms and conducts sample-complexity analysis. An empirical study is conducted in Sec.~\ref{sec:exp}. Then Sec.~\ref{sec:related} contains more detailed related work and Sec.~\ref{sec:conclusion} concludes the paper.

\section{Preliminaries}
\label{sec:prelim}
In this section, we provide a brief overview of variance-reduction techniques in non-convex stochastic gradient descent and off-policy policy search algorithms in reinforcement learning. In particular, we describe the two main building blocks of our work: STORM ~\citep{storm} and GeoffPAC ~\citep{geoffpac}.
\subsection{Stochastic Variance Reduction and STORM}
\label{prelim:svr}
STORM ~\citep{storm} is a state-of-the-art stochastic variance-reduction algorithm that avoids the reference sample set storage problem.  
The stochastic optimization problem is of the form $J(x)=\min_{x \in \mathbb{R}^d} \mathbb{E} \big[f(x, \xi)\big]$, 
where the function $J:\mathbb{R}^d \rightarrow  \mathbb{R}$ can be thought of as the training loss of a machine learning model, and $f(x, \xi)$ represents the loss of a sample $\xi$ for the parameter $x \in \mathbb{R}^d$. In this setting, SGD produces a sequence of iterates $x_1,\dots,x_T$ using the recursion $x_{t+1} = x_t - \eta_t \nabla f(x_t,\xi_t)$, where $f(\cdot,\xi_1),\dots,f(\cdot,\xi_T)$ are i.i.d.~samples and $\eta_1,\dots\eta_T\in \mathbb{R}$ is a sequence of stepsizes. 
STORM replaces the gradient in the SGD's update with %
\begin{align}
g_t = \underbrace{(1- \alpha_t) g_{t-1} + \alpha_t \nabla f(x_{t},\xi_t)}_{\circled{1}} + \underbrace{ (1-\alpha_t)(\nabla f(x_t,\xi_t) - \nabla f(x_{t-1}, \xi_t))}_{\circled{2}}~,
\label{eq:STORM-Grad}
\end{align}
where $\alpha_t \in [0,1]$ is the momentum parameter, $\circled{1}$ is the update rule of vanilla SGD with momentum, and $\circled{2}$ is an additional term introduced to reduce variance.
STORM achieves the so-far optimal convergence rate of $O(1/\epsilon^{3/2})$ to find a $\epsilon$-stationary point---$||\nabla J(x)||^2 \leq \epsilon$. (We report the convergence rates of several variance reduction algorithms in Appendix~\ref{sec:compare-svr} as well.)
Thus STORM achieves variance reduction using a version of the momentum term,
and does not use the estimated gradient at a checkpoint in its update. It alleviates the need to store a large reference sample set and therefore is memory-efficient. 
\subsection{Reinforcement Learning and Off-Policy Policy Search}
\label{prelim:emp}
In RL, the agent's interaction with the environment is often modeled as a Markov Decision Process (MDP), which is a tuple $({\mathcal{S},\mathcal{A},p,r,\gamma})$, where $\mathcal{S}$ and $\mathcal{A}$ are the state and action sets, the transition kernel $p(s'|s,a)$ specifies the probability of transition from state $s\in\mathcal{S}$ to state $s'\in\mathcal{S}$ by taking action $a\in\mathcal{A}$, $r(s,a):\mathcal{S}\times\mathcal{A}\to\mathbb{R}$ is a bounded reward function, and $0\leq\gamma<1$ is a discount factor. 
Given a (stochastic) policy $\pi: \mathcal{S} \times \mathcal{A} \rightarrow [0, 1]$, $V_\pi:\mathcal{S}\rightarrow\mathbb R$ is the associated state value function,
$Q_\pi:\mathcal{S} \times \mathcal{A} \rightarrow\mathbb R$ the state-action value function,
and $P_\pi$ the transition kernel, $P_\pi(s'|s) = \sum_a \pi(a | s)p(s^\prime|s, a)$. 
In policy gradient methods, $\pi$ is often approximated in a parametric form $\pi_\theta$ which is differentiable with respect to its parameter~$\theta$.

In the off-policy setting, an agent aims to learn a target policy $\pi$ from samples generated by a behavior policy $\mu$. We assume that the Markov chains induced by policies $\pi$ and $\mu$ are ergodic, and denote by $d_\pi$ and $d_\mu$ their unique stationary distributions. The stationary distribution matrices are $D_\pi := {\rm Diag} (d_\pi)$ and $D_\mu := {\rm Diag} (d_\mu)$. The standard coverage assumption for $\pi$ and $\mu$ is used, $\forall (s, a), \pi(a | s) > 0$ implies $\mu(a | s) > 0$~\citep{sutton2018reinforcement}.
With this assumption, the non-trivial importance sampling ratio is well defined, $\rho(s, a) := \frac{\pi(a | s)}{\mu(a | s)}$. For simplicity, we use $\rho_t := \rho(s_t, a_t)$ for the importance sampling ratio at time $t$.
\textit{Distribution mismatch} between the stationary distributions of the behavior and the target policies is the primary challenge in off-policy learning. To correct this mismatch, ~\citet{sutton2016emphatic} introduced {\em emphatic weighting}, where for a given state $s$ an emphatic weight $M(s)$ is computed to offset the state-wise distribution mismatch. This technique has recently been widely used for off-policy value function estimation~
\citep{sutton2016emphatic,hallak2017consistent} and policy optimization~\citep{imani2018off,geoffpac}.

In policy gradient literature, different objectives have been used. In the on-policy continuing task setting, the goal is often to optimize the \textit{alternative life objective} $J_\pi = \sum_s d_\pi(s) i(s) V_\pi(s)$~\citep{silver2015reinforcement}, which is equivalent to optimizing the average reward objective~\citep{puterman2014markov}, when $\gamma=1$ and interest function $i(s)=1,\;\forall s\in\mathcal S$. 
On the other hand, in the off-policy continuing task setting where $d_\pi$ is difficult to achieve due to that the samples are collected from the behavior policy ~\citep{imani2018off}, it is more practical to resort to the \textit{excursion objective}~\citep{imani2018off}---that is,~$J_\mu := \sum_s d_\mu(s)i(s)V_\pi(s)$ instead of $J_\pi$, where $d_\pi$ (in $J_\pi$) is replaced by $d_\mu$ (in $J_\mu$). However, the excursion objective does not correctly represent the state-wise weighting of the target policy $\pi$'s performance~\citep{gelada2019off}. 
To address this,~\citet{geoffpac} introduced the \textit{counterfactual objective}, $J_{\hat{\gamma}}$, to unify $J_\mu$ and $J_\pi$ in the continuing RL setting:
\begin{align}
J_{\hat{\gamma}} := \sum_s d_{\hat{\gamma}}(s) \hat{i}(s) V_\pi(s),
\label{eq:obj_general}
\end{align}
where $\hat{\gamma} \in [0, 1]$ is a constant, and $d_{\hat{\gamma}}$ is the stationary distribution of the Markov chain with transition matrix ${{\rm{P}}_{\hat \gamma }} = \hat \gamma {{\rm{P}}_\pi } + (1 - \hat \gamma )\mathbf{1}{d^ \top _\mu}$. $d_{\hat{\gamma}} = (1 - \hat{\gamma})(\mathbf{I} - \hat{\gamma} P_\pi^\top)^{-1} d_\mu$ ($\hat{\gamma}<1$) and $d_{\hat{\gamma}} =d_{\pi}$ ($\hat{\gamma}=1$), and
$\hat{i}$ is a user-defined extrinsic interest function. In these equations, $\mathbf{I}$ and $\mathbf{1}$ are the identity matrix and all-one column vector. 
\citet{geoffpac} argue that $J_{\hat{\gamma}}$ is potentially a better objective for off-policy control, for the following reasons: \textbf{1)} $J_{\hat{\gamma}}$ is more general than $J_{\pi}$ and $J_\mu$, since $J_\pi$ and $J_\mu$ can be recovered from $J_{\hat{\gamma}}$ for $\hat{\gamma} = 1$ and $\hat{\gamma} = 0$, respectively. This is because for $\hat{\gamma} = 1$ and $\hat{\gamma} = 0$, we have $d_{\hat{\gamma}} = d_{\pi}$ and $d_{\hat{\gamma}} = d_{\mu}$~\citep{gelada2019off}. An intermediate $\hat{\gamma}$ tweaks the stationary distribution towards that of the target policy and makes the objective closer to the original alternative life objective. 
\textbf{2)} $J_{\hat{\gamma}}$ is more suitable than $J_\mu$ for the off-policy setting, as it better reflects state-wise weighting of $\pi$'s performance and typically leads to a better empirical performance according to the observation of~\citep{geoffpac}. 
The Generalized Off-Policy Actor-Critic (GeoffPAC) algorithm is a state-of-the-art approach that optimizes $J_{\hat{\gamma}}$. A key component of the GeoffPAC algorithm is the \textit{emphatic weight update component}, which is discussed in detail in the Appendix.
As noted above, when $\hat \gamma = 0$, the stationary distribution $d_{\hat{\gamma}}$ reduces to $d_{\mu}$, and correspondingly the objective of GeoffPAC ($J_{\hat{\gamma}}$) reduces to the that of Actor-Critic with Emphatic-weighting (ACE) algorithm ($J_\mu$)~\citep{imani2018off}.
\section{Algorithm Design and Analysis}
\label{sec:alg}
\subsection{VOMPS Algorithm Design}

We consider off-policy policy optimization of infinite-horizon discounted MDP problems, which is identical to the problem setting of ACE and GeoffPAC. Our new algorithm the \textbf{V}ariance-reduced \textbf{O}ff-policy \textbf{M}emory-efficient \textbf{P}olicy \textbf{S}earch (VOMPS) is presented in Algorithm~\ref{alg:vomps}. For simplicity, the subscript of $\pi$ is omitted for $V_{\pi}$ and $Q_{\pi}$. 
We denote the state value function as $V(s; \nu)$ and the policy function as $\pi(a|s;\theta)$ with $\nu$ and $\theta$ being their parameters. A parametric approximation of the density ratio function, $C(s;\psi)$ is introduced to reweight online updates to the value
function in order to avoid divergence issues in
off-policy learning~\citep{gelada2019off,geoffpac}.

VOMPS is an off-policy actor-critic method that uses emphatic weighting based policy gradient for off-policy stability guarantee, and stochastic recursive momentum for memory-efficient variance reduction.
Algorithm~\ref{alg:vomps} is illustrated below in the order of updating the critic, the density ratio, the emphatic weights, and the actor. The hyperparameters in the algorithm are identified in the Appendix.
\textit{{First}}, the \textit{critic update} is conducted using a Temporal Difference (TD) method:
\begin{align}
\delta_{t} = r_{t} + \gamma V(s_{t+1}; \nu_{t}) - V(s_{t}; \nu_{t}), \qquad \nu_{t+1} = \nu_{t} + \alpha_{\nu} \delta_{t} \nabla_{\nu}V(s_{t}; \nu_{t})~,
\label{eq:critic}
\end{align}
where $\delta_{t}$ is the TD error at $t$-th timestep, and $\alpha_{\nu}$ is the stepsize.
In fact, the critic is not limited to the TD method and can be replaced by other approaches in order to improve the value function estimation.
\textit{{Second}}, the \textit{density ratio update} is performed:
\begin{align}
    \psi_{t+1} = \psi_{t} + \alpha_{\psi} \big(\hat{\gamma} \rho_{t} C(s_{t}; \psi_{t}) + (1 -\hat{\gamma}) - C(s_{t+1}; \psi_{t}) \big)\nabla_{\psi}C(s_{t+1}; \psi_{t}) ~,
    \label{eq:density}
\end{align}
where $\alpha_{\psi}$ is the stepsize.
\textit{{Third}}, we conduct the \textit{emphatic weights update} of $M_t^{(1)}, M_t^{(2)}$ that are used to correct the impact of the distribution discrepancy between $\pi$ and $\mu$. For the \textit{counterfactual objective} of $J_{\hat{\gamma}}$, the policy gradient $\nabla J_{\hat{\gamma}}$ is computed as follows: 
\begin{align}
    \nabla J_{\hat{\gamma}} &= \sum_s d_{\hat{\gamma}}(s)\hat{i}(s)\nabla V(s) + \sum_s \nabla d_{\hat{\gamma}}(s)\hat{i}(s)V(s) \\
    & = \sum_s d_\mu(s) C(s) \hat{i}(s) \sum_a Q(s, a)\nabla \pi(a|s) + \sum_s d_\mu(s) \nabla C(s)\hat{i}(s)V(s) \\
    &= \mathbb{E}_\mu \big[M_t^{(1)} \rho_t \delta_t \nabla \log \pi(a_t|s_t) + \hat{\gamma} M_t^{(2)} V(s_t) \hat{i}(s_t) \big] ~.
    \label{eq:grad_general}
\end{align}
Specifically, $M_t^{(1)}$ (resp. $M_t^{(2)}$) is used to adjust the weighting of $\rho_t \delta_t \nabla \log \pi(a_t|s_t)$ (resp. $\hat{\gamma} V(s_t) \hat{i}(s_t)$) caused by the discrepancy between $d_\pi$ and $d_\mu$.  Details of the update law is shown in Appendix~\ref{sec:app:emphatic}, which is adopted from the emphatic weight update component proposed in~\citep{geoffpac}.
Let $Z_t :=M^{(1)}_t \rho_t \delta_t \nabla \log \pi(a_t|s_t) + \hat{\gamma} M_t^{(2)} V(s_t) \hat{i}(s_t) $ be an estimate of the policy gradient at time $t$, then according to~\citep{geoffpac}, we have $\mathbb{E}_\mu [Z_t]= \nabla J_{\hat{\gamma}}$. That is, our estimation of the policy gradient is unbiased, as shown in the last equality of Eq.~\eqref{eq:grad_general}. 
\textit{{Fourth}}, the \textit{actor update} via policy gradient is conducted.
Instead of using the vanilla actor update in~\citep{geoffpac}, we introduce the STORM~\citep{storm} technique to reduce the variance in the gradient estimates. According to the technique used in STORM~\citep{storm}, both $Z_t(a_t,s_t;\theta_{t-1})$ and $Z_t(a_t,s_t;\theta_{t})$ need to be calculated as follows:
\begin{align}
    Z_t(a_t,s_t;\theta_{t}) &= M^{(1)}_t \rho_{t} \delta_{t} \nabla_{\theta} \log \pi(a_{t} | s_{t};\theta_{t}) + \hat{\gamma} M^{(2)}_t V(s_{t};\nu) \hat{i}(s_{t}) ~,
            \label{eq:zt_now}\\
    Z_t(a_t,s_t;\theta_{t-1}) &= M^{(1)}_t \rho_{t} \delta_{t} \nabla_{\theta} \log \pi(a_{t} | s_{t};\theta_{t-1}) + \hat{\gamma} M^{(2)}_t V(s_{t};\nu) \hat{i}(s_{t}) ~.
            \label{eq:zt_prev}
\end{align}
The actor's update law is formulated as $\theta_{t+1} = \theta_t + \eta_t g_{t}$, where the two key ingredients are the \textit{stochastic recursive momentum} update term $g_{t}$ and the \textit{adaptive stepsize} $\eta_t$. 
The update term $g_t$ is computed as
\begin{align}
    g_{t} &= Z_t(a_t,s_t;\theta_{t}) + (1 - \alpha_t)\big(g_{t-1} - Z_t(a_t,s_t;\theta_{t-1})\big)~,  \label{eq:gt_general} 
\end{align}
and the adaptive stepsizes $\eta_t$ and $\alpha_t $ are computed as follows, with $k$, $w$, and $\beta$ inherited from STORM,
\begin{align}
G_{t} = \|Z_t(a_t,s_t;\theta_{t})\|, \quad \eta_t = k/(w + \sum^{t}_{i=1} {G_t^2})^{1/3}~,~\quad
\alpha_{t} = \beta \eta^2_{t-1}.
\label{eq:stepsize}
\end{align}
It should be noted that $Z_t(a_t,s_t;\theta_{t})$ is used in Eq.~\eqref{eq:gt_general} \&~\eqref{eq:stepsize}, while $Z_t(a_t,s_t;\theta_{t-1})$ is used in Eq.~\eqref{eq:gt_general}.


\begin{algorithm}[t]
\caption{ Variance-reduced Off-policy Memory-efficient Policy Search (VOMPS)}
\label{alg:vomps}
$V(s; \nu)$: state value function parameterized by $\nu$\;\\
$C(s; \psi)$: density ratio estimation parameterized by $\psi$\;\\
$\pi(a|s; \theta)$: policy function parameterized by $\theta$\; 
\begin{algorithmic}[1] 
\STATE \textbf{Input}:  Parameters $\;\psi$, $\nu$, $\theta$;

    \FOR{timestep $t=0$ to $T$}
        \STATE Sample a transition $s_t$, $a_t$, $r_t$, $s_{t+1}$ according to behavior policy $\mu$.
        \STATE \textbf{Critic update} according to Eq.~\eqref{eq:critic}.
        \STATE \textbf{Density ratio update} according to Eq.~\eqref{eq:density}.
        \STATE \textbf{Emphatic weights update}: update $M^{(1)}_t, M^{(2)}_t, Z_t(a_t,s_t;\theta_{t})$ as in Figure~\ref{fig:geoffpac} in the Appendix.
        \STATE Compute actor stepsize as in Eq.~\eqref{eq:stepsize}, $Z_t(a_t,s_t;\theta_{t-1})$ as in Eq.~\eqref{eq:zt_prev} and $g_{t}$ as in Eq.~\eqref{eq:gt_general}.
        \STATE \textbf{Actor update} as $    \theta_{t+1} = \theta_t + \eta_t g_{t}$.
    \ENDFOR
\STATE \textbf{Output I}: Parameters $\psi_{T+1}$, $\nu_{T+1}$, $\theta_{T+1}$.
\STATE \textbf{Output II}:Parameters $\psi_{T+1}$, $\nu_{T+1}$, $\theta_\tau$, where $\tau$ is sampled with a probability of $p(\tau  = t)\propto \frac{1}{\eta_t^2}$.
\end{algorithmic}
\end{algorithm}

\subsection{Theoretical Analysis}
In this section, we present a theoretical analysis of the VOMPS algorithm. To start, we first present the assumptions used in the study.
\begin{assumption}[\textbf{Bounded Gradient}]
\label{assump:grad}
\citep{svrpg,sarahpg}
Let $\pi_{\theta}(a|s)$ be the agent's policy at state $s$. There exist constants $W,U>0$ such that the log-density of the policy function satisfies:
$
\|\nabla_{\theta}\log \pi_{\theta}(a|s)\|_2\leq W,\quad \big\|\nabla_{\theta}^2\log \pi_{\theta}(a|s)\big\|_2\leq U,
$
for $\forall$ $a\in\mathcal{A}$ and $s\in\mathcal{S}$, and $||\cdot||_2$ is the $\ell_2$ norm.
\end{assumption}

\begin{assumption}[\textbf{Lipschitz continuity and Bounded Variance}]\citep{xu2019improved,sarahpg,storm}
The estimation of policy gradient $Z(\theta)$ is bounded, Lipschitz continuous, and has a bounded variance, i.e.,  
there exist constants $L, G, \sigma$ such that
$
\|Z(\theta_1) - Z(\theta_2)\|_2 \leq L\|\theta_1-\theta_2\|_2
$
for $\forall$ $\theta_1,\theta_2\in \mathbb{R}^d$, and $\|Z(\theta)\|_2 \leq G, \mathbb{E}[\|Z(\theta) -\nabla J_{\hat{\gamma}}(\theta)\|_2^2] \leq {\sigma}^2$ for $\forall$ $\theta\in\mathbb{R}^d$.
\end{assumption}

We now present our main theoretical result, the convergence analysis of Algorithm~\ref{alg:vomps}.
\begin{theorem}
Under the above assumptions, for any $b>0$, let $k=\frac{b \sigma^\frac{2}{3}}{L}$, $\beta=28L^2 + \sigma^2/(7 L k^3) = L^2(28 + 1/(7 b^3))$, and $w=\max\left((4Lk)^3, 2\sigma^2, \left(\tfrac{\beta k}{4 L}\right)^3\right) = \sigma^2\max\left((4 b)^3, 2, (28b+\frac{1}{7b^2})^3/64\right)$. Then, the output of Algorithm~\ref{alg:vomps} satisfies
\begin{align}
    \mathbb{E} \left[\|\nabla J_{\hat{\gamma}}(\hat{\theta})\|^2\right]
= \mathbb{E} \left[\frac{1}{T}\sum_{t=1}^T \|\nabla J_{\hat{\gamma}}(\theta_t)\|^2\right]
\leq \frac{40\Delta_{\Phi}}{T^{2/3}} + \frac{2\beta^2\sigma^2}{L^2T^{2/3}},
\end{align}
where $\Delta_{\Phi} \leq \Delta_{J_{\hat{\gamma}}} + \frac{\|\epsilon_0\|^2}{32\eta_0L^2},\Delta_{J_{\hat{\gamma}}} =   J_{\hat{\gamma}}(\theta^*)-J_{\hat{\gamma}}(\theta) , \forall \theta\in R^d$, and $\theta^\star$ is the maximizer of $J_{\hat{\gamma}}$.
\label{thm:vomps}
\end{theorem}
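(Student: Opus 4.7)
The plan is to adapt the STORM convergence analysis of \citet{storm} to the off-policy actor-critic estimator $Z_t$, exploiting the unbiasedness $\mathbb{E}_\mu[Z_t] = \nabla J_{\hat{\gamma}}(\theta_t)$ established via the GeoffPAC emphatic-weighting derivation in Eq.~\eqref{eq:grad_general}. Together with Assumption~\ref{assump:grad} and the Lipschitz/bounded-variance Assumption~2, $Z$ satisfies exactly the hypotheses under which STORM's machinery applies, so what remains is to port the bookkeeping to the notation of the algorithm and to the performance-maximization direction $\theta_{t+1} = \theta_t + \eta_t g_t$ (rather than a descent step).

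First I would control the tracking error $\epsilon_t := g_t - \nabla J_{\hat{\gamma}}(\theta_t)$. Substituting Eq.~\eqref{eq:gt_general} and subtracting $\nabla J_{\hat{\gamma}}(\theta_t)$, one decomposes $\epsilon_t = (1-\alpha_t)\epsilon_{t-1} + (1-\alpha_t)\bigl[(Z_t(\theta_t) - Z_t(\theta_{t-1})) - (\nabla J_{\hat{\gamma}}(\theta_t) - \nabla J_{\hat{\gamma}}(\theta_{t-1}))\bigr] + \alpha_t[Z_t(\theta_t) - \nabla J_{\hat{\gamma}}(\theta_t)]$. The last two brackets are mean-zero conditional on the history, so squaring and taking expectations eliminates the cross terms. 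Combining $L$-Lipschitzness of $Z$ (to bound the middle bracket by $L\eta_{t-1}\|g_{t-1}\|$) with the variance bound $\sigma^2$ on the last bracket yields a recursion of the form $\mathbb{E}\|\epsilon_t\|^2 \leq (1-\alpha_t)^2\mathbb{E}\|\epsilon_{t-1}\|^2 + 2(1-\alpha_t)^2 L^2\eta_{t-1}^2\mathbb{E}\|g_{t-1}\|^2 + 2\alpha_t^2\sigma^2$.

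Next I would introduce a Lyapunov potential of the form $\Phi_t = -J_{\hat{\gamma}}(\theta_t) + \tfrac{c}{\eta_{t-1}}\|\epsilon_t\|^2$ for a suitable constant $c$ tied to $L$ and $\beta$. Using $L$-smoothness of $J_{\hat{\gamma}}$ for an ascent step, $J_{\hat{\gamma}}(\theta_{t+1}) \geq J_{\hat{\gamma}}(\theta_t) + \eta_t\langle \nabla J_{\hat{\gamma}}(\theta_t), g_t\rangle - \tfrac{L\eta_t^2}{2}\|g_t\|^2$, and then the identity $\langle \nabla J_{\hat{\gamma}}(\theta_t), g_t\rangle = \tfrac{1}{2}(\|\nabla J_{\hat{\gamma}}(\theta_t)\|^2 + \|g_t\|^2 - \|\epsilon_t\|^2)$, one derives a per-step bound $\mathbb{E}[\Phi_t - \Phi_{t-1}] \leq -\tfrac{\eta_t}{4}\mathbb{E}\|\nabla J_{\hat{\gamma}}(\theta_t)\|^2 + \tfrac{\beta^2\sigma^2\eta_{t-1}^3}{L^2}\text{-type terms}$. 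The choices $\beta = 28L^2 + \sigma^2/(7Lk^3)$ and $w \geq (4Lk)^3$ are exactly what is needed to guarantee $\eta_t \leq 1/(4L)$ uniformly and to absorb the $L^2\eta_{t-1}^2\|g_{t-1}\|^2$ term from the $\epsilon_t$ recursion into the $\|g_t\|^2$ slack provided by smoothness.

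Summing the telescope from $t=1$ to $T$ and rearranging gives $\sum_{t=1}^T \eta_t \mathbb{E}\|\nabla J_{\hat{\gamma}}(\theta_t)\|^2 \leq 4\mathbb{E}[\Phi_0 - \Phi_T] + O(\beta^2\sigma^2/L^2)\sum_t \eta_{t-1}^3$, with $\Phi_0 - \Phi_T$ absorbed into $\Delta_{J_{\hat{\gamma}}} + \|\epsilon_0\|^2/(32\eta_0 L^2) = \Delta_\Phi$. Dividing by $T\cdot \min_t \eta_t$ and applying the deterministic stepsize lower bound $\eta_t \geq k/(w + TG^2)^{1/3} = \Omega(T^{-1/3})$ gives the claimed $O(T^{-2/3})$ rate, with the two advertised constants $40\Delta_\Phi$ and $2\beta^2\sigma^2/L^2$. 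The main obstacle is precisely this last step: because $\eta_t$ depends on the entire random history $\{G_i\}_{i\leq t}$, one cannot simply substitute deterministic values, and one must combine Hölder/Jensen-type inequalities on $\sum_t 1/(w + \sum_i G_i^2)^{1/3}$ with the uniform bound $G_t \leq G$ from Assumption~2 to convert the adaptive sums into the explicit $T^{2/3}$ denominator. This is the analytical heart inherited from STORM, and the specific algebraic tuning of $k,\beta,w$ in the theorem statement is calibrated to make that substitution go through cleanly.
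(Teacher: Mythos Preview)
Your overall architecture matches the paper's: the decomposition of $\epsilon_t$, the Lyapunov potential coupling $-J_{\hat\gamma}(\theta_t)$ with $\|\epsilon_t\|^2/\eta_{t-1}$, and the smoothness-based ascent estimate are exactly the right ingredients, and up through a per-step bound of the form $\mathbb{E}[\Phi_{t}-\Phi_{t+1}] \le -\tfrac{\eta_t}{8}\mathbb{E}\|\nabla J_{\hat\gamma}(\theta_t)\|^2 + O(\beta^2\sigma^2\eta_t^3/L^2)$ your plan and the paper's proof coincide.

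The gap is in your final paragraph. Summing the telescope directly and then dividing by $T\min_t\eta_t$, as you propose, reproduces the \emph{original} STORM analysis and incurs an unavoidable $\log T$ factor: since $\eta_t^3 \sim k^3/(w+tG^2)$, the noise sum $\sum_{t\le T}\eta_{t-1}^3$ is of order $\log T$, and after dividing by $T\min_t\eta_t \sim T^{2/3}$ the bound becomes $O\bigl((\Delta_\Phi+\log T)/T^{2/3}\bigr)$, not the clean $O(T^{-2/3})$ claimed in the theorem. The paper says this explicitly just after the theorem statement: the vanilla STORM argument gives only $O(\log(1/\epsilon)/\epsilon^{2/3})$. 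The obstacle you flag---randomness of $\eta_t$---is actually benign, because $G_t\le G$ deterministically gives two-sided deterministic bounds on $\eta_t$; the real issue is the harmonic-type noise sum.

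What the paper does instead is an \emph{increasing-weight} summation. It divides the per-step inequality by $\eta_t^3$ \emph{before} summing, so the noise contribution becomes a constant per step:
\[
\sum_{t=1}^T \eta_t^{-2}\,\mathbb{E}\|\nabla J_{\hat\gamma}(\theta_t)\|^2 \;\le\; \sum_{t=1}^T 8\eta_t^{-3}\,\mathbb{E}[\Phi_{t+1}-\Phi_t] \;+\; \frac{\beta^2\sigma^2}{2L^2}\,T.
\]
The telescope on the right is no longer plain; using $\eta_t^{-3}\sim t$ it is handled by an Abel-summation identity $\sum_{t=1}^T t(\Phi_{t+1}-\Phi_t)=\sum_{t=1}^{T+1}(\Phi_{T+1}-\Phi_t)\le (T+1)\Delta_\Phi$. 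Dividing through by $\sum_t\eta_t^{-2}\ge \tfrac{2}{5}T^{5/3}$ then produces the advertised constants $40\Delta_\Phi/T^{2/3}$ and $2\beta^2\sigma^2/(L^2T^{2/3})$. Note also that this weighted average is precisely $\mathbb{E}\|\nabla J_{\hat\gamma}(\theta_\tau)\|^2$ for the randomized Output~II of Algorithm~\ref{alg:vomps}, where $\tau$ is sampled with probability $\propto \eta_\tau^{-2}$; that is the $\hat\theta$ intended in the theorem, whereas your uniform-average reading would not follow from this argument.
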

Theorem~\ref{thm:vomps} indicates that VOMPS requires $O(1/\epsilon^{3/2})$ samples to find an $\epsilon$-stationary point. As VOMPS is developed based on GeoffPAC and STORM, the proof of convergence rates of VOMPS is similar to STORM. 
 However, the proof is not a trivial extension by merely instantiating the objective function to $J_{\hat{\gamma}}$ in the RL settings.
  If we just apply the original analysis of STORM, we can only achieve $O(\frac{\log(1/\epsilon)}{\epsilon^{2/3}})$. 
In \cite{ yuan2020stochastic}, it improves the sample complexity to $O(1/\epsilon^{2/3})$ by introducing the large mini-batch $O(1/\sqrt{\epsilon})$ and using extremely small stepsize $o(\epsilon)$. 
Nevertheless the improved sample complexity, 
the introduced $O(1/\sqrt{\epsilon})$ mini-batch size will lead to memory inefficiency, and the $O(\epsilon)$ stepsize will slow down the training process. VOMPS overcomes the above two weaknesses and achieves the $O(1/\epsilon^{2/3})$ sample complexity by applying increasing weights strategy and an automatically adjusted stepsize strategy that proportions to iterations. These techniques are not used in the original STORM method. As a result, VOMPS is the first variance reduced memory-efficient off-policy method that achieves the optimal sample complexity, which matches the lower-bound provided in~\cite{arjevani2019lower}.

\begin{table}[ht]
 \begin{center}
 \begin{tabular}{c c c c c}
 \toprule
 {Algorithms}   & Objective & {Sample Complexity}  & {Off-Policy?} 
  & {Required Batch} \\ 
 \toprule
 SVRPG \citep{svrpg}  & $J_\pi$ &$O(1/{\epsilon ^2})$ &$\times$   & $O(1/{\epsilon})$ \\
 SVRPG \citep{xu2019improved} & $J_\pi$ &$O(1/{\epsilon ^{5/3}})$ &$\times$ 
  & $O(1/{\epsilon^{2/3}})$ \\
 SRVR-PG \citep{sarahpg} & $J_\pi$ & $O(1/{\epsilon ^{3/2}})$ &$\times$ 
  & $O(1/{\epsilon^{1/2}})$ \\
 ACE-STORM (This paper)  & $J_{\mu}$ & $O(1/{\epsilon ^{3/2}})$ & \checkmark   & $\times$ \\
 VOMPS (This paper) & $J_{\hat \gamma}$ & $O(1/{\epsilon ^{3/2}})$ & \checkmark   & $\times$\\
 \bottomrule
 \end{tabular}
 \end{center}
 \begin{footnotesize}
 \caption{\label{table:pg_rate}{\footnotesize Comparison on convergence rate of different algorithms when $\|\nabla J(\theta)\|_2^2 \leq \epsilon$. The $\times$ in ``Required Batch'' means that no mini-batch is needed, aka, the algorithm is memory efficient.}}
 \end{footnotesize}
 \end{table}
\begin{remark}
Theorem~\ref{thm:vomps} indicates that VOMPS enjoys the same convergence rate as the state-of-the-art algorithms together with SRVR-PG~\citep{sarahpg}. A summary of state-of-the-art convergence rate is summarized in Table~\ref{table:pg_rate}.
It should be noted that unlike other algorithms that optimize $J_\pi$, the objective function of VOMPS is $J_{\hat \gamma}$. However, these two objective functions only differ in constants in the on-policy setting. Thus, it is a fair comparison of the convergence rate. 
\end{remark}

\subsection{ACE-STORM Algorithm Design}
As an extension, we also propose the \textbf{A}ctor-\textbf{C}ritic with \textbf{E}mphatic weighting and \textbf{STO}chastic \textbf{R}ecursive \textbf{M}omentum (ACE-STORM) algorithm, which is an integration of the ACE algorithm~\citep{imani2018off} and STORM~\citep{storm}. Similar to discussed above, the objective of ACE-STORM is also a special case of that in Algorithm~\ref{alg:vomps} by setting $\hat \gamma = 0$. The pseudo-code of the ACE-STORM Algorithm is presented in the Appendix due to space constraints.


\section{Experiments and Results}
\label{sec:exp}
The experiments are conducted to investigate the following questions empirically. i) How do VOMPS\&ACE-STORM compare with state-of-art off-policy policy gradient methods, such as GeoffPAC~\citep{geoffpac}, ACE, DDPG~\citep{lillicrap2015continuous}, and TD3~\citep{fujimoto2018addressing}?  ii) How do VOMPS\&ACE-STORM compare with on-policy variance-reduced policy gradient methods, e.g., SVRPG~\citep{svrpg} and SRVR-PG~\citep{sarahpg}? iii) How is VOMPS\&ACE-STORM resilient to action noise in policy gradient methods?

Since the tasks for original domains are episodic, e.g., \texttt{CartPoleContinuous-v0}, \texttt{Hopper-v2} and \texttt{HalfCheetah-v2}, we modify them as continuing tasks following ~\citep{geoffpac}--- the discount factor $\gamma$ is set to $0.99$ for all non-terminal states and $0$ for terminal states. The environment is reset to the initial state if the agent reaches the terminal state.
Therefore, simulation-based cumulative rewards (Monte Carlo return) by executing the (learned) policy $\pi$ is used as the performance metric, while results of episodic return are also provided in the Appendix.
All the curves in the results are averaged over $10$ runs, where the solid curve indicates the mean and the shaded regions around the mean curve indicate standard deviation errors. To better visualize the plots, curves are smoothed by a window of size
$20$. Shorthands ``1K'' represents $10^3$, and ``1M'' represents $10^6$. 
In the off-policy setting, the behavior policy $\mu$ follows a fixed uniform distribution. VOMPS, ACE-STORM, GeoffPAC, and ACE have the same critic component in all experiments for a fair comparison.

\subsection{Tabular off-policy policy gradient}
We first compare the performance of ACE, GeoffPAC, ACE-STORM, and VOMPS on the two-circle MDP domain~\citep{imani2018off,geoffpac} in terms of their dynamic and asymptotic solutions. In the two-circle MDP, there are a finite number of states, and an agent only decides at state \texttt{A} on either transitioning to state \texttt{B} or state \texttt{C}, whereas the transitions at other states will always be deterministic. The discount factor $\gamma = 0.6$ and rewards are $0$ unless specified on the edge as shown in Fig.~\ref{fig:circle}.
\begin{figure*}[htb!]
\begin{subfigure}{.5\textwidth}
  \centering
    \includegraphics[height=2.5cm,width=.6\textwidth]{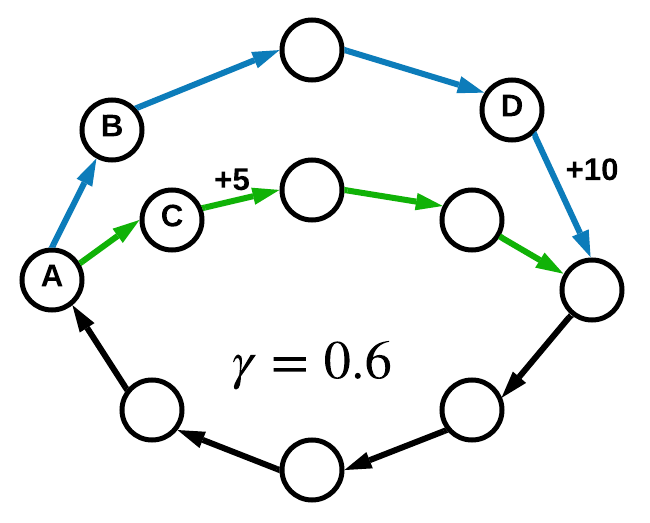}
  \caption{Two-circle MDP}
  \label{fig:circle}
\end{subfigure}
\begin{subfigure}{.5\textwidth}
  \centering
    \includegraphics[height=3.0cm,width=.8\textwidth]{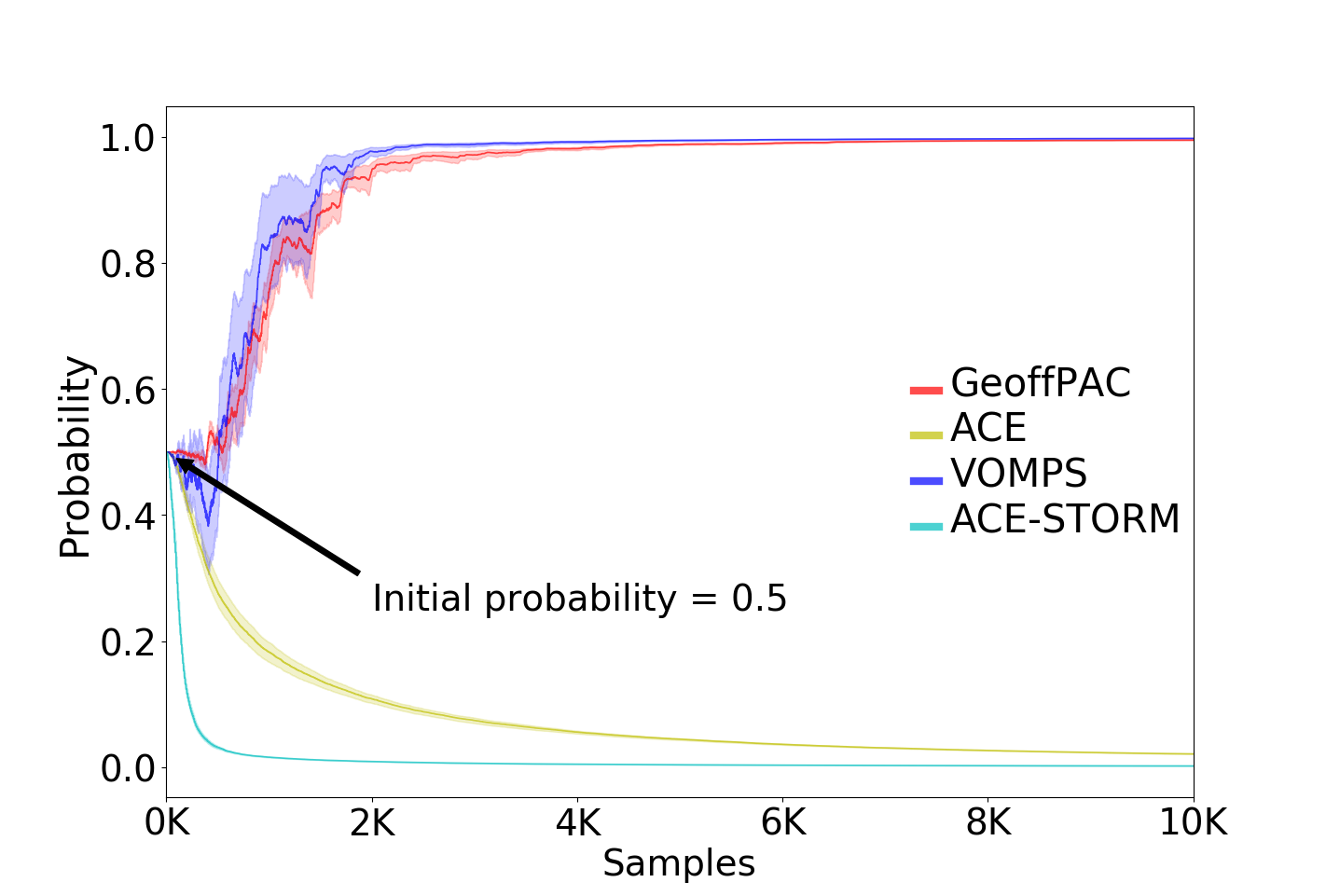}
  \caption{The probability of transitioning from A to B}
  \label{fig:circle_result}
\end{subfigure}
\caption{The two-circle MDP}
\end{figure*}
The algorithms for this domain, GeoffPAC, ACE, VOMPS, and ACE-STORM, are implemented with a tabular version, where value function and density ratio function is computed via dynamic programming. The behavior policy $\mu$ follows a uniform distribution, and $\pi(A \rightarrow B)$, the probability from A to B under the target policy $\pi$ is reported in  Fig.~\ref{fig:circle_result}. 
As shown in Fig.~\ref{fig:circle_result}, VOMPS and GeoffPAC gradually choose to transition from A to B so that the agent would take the route with blue color and obtain a reward of $+10$. Compared with GeoffPAC, VOMPS converges faster.
Both ACE-STORM and ACE move from A to C, and ACE-STORM converges faster than ACE. Moving from A to C is an inferior solution since the agent will take the route with green color and fail to obtain a higher reward.
The difference between asymptotic solutions of GeoffPAC/VOMPS and ACE/ACE-STORM is due to the difference between the objective functions $J_{\hat \gamma}, J_\mu$, and the difference in the training process is due to the STORM component integrated into VOMPS and ACE-STORM.

\begin{figure*}[htb!]
\centering
\begin{subfigure}{.44\textwidth}
  \centering
  \includegraphics[height=3.5cm,width=1.\textwidth]{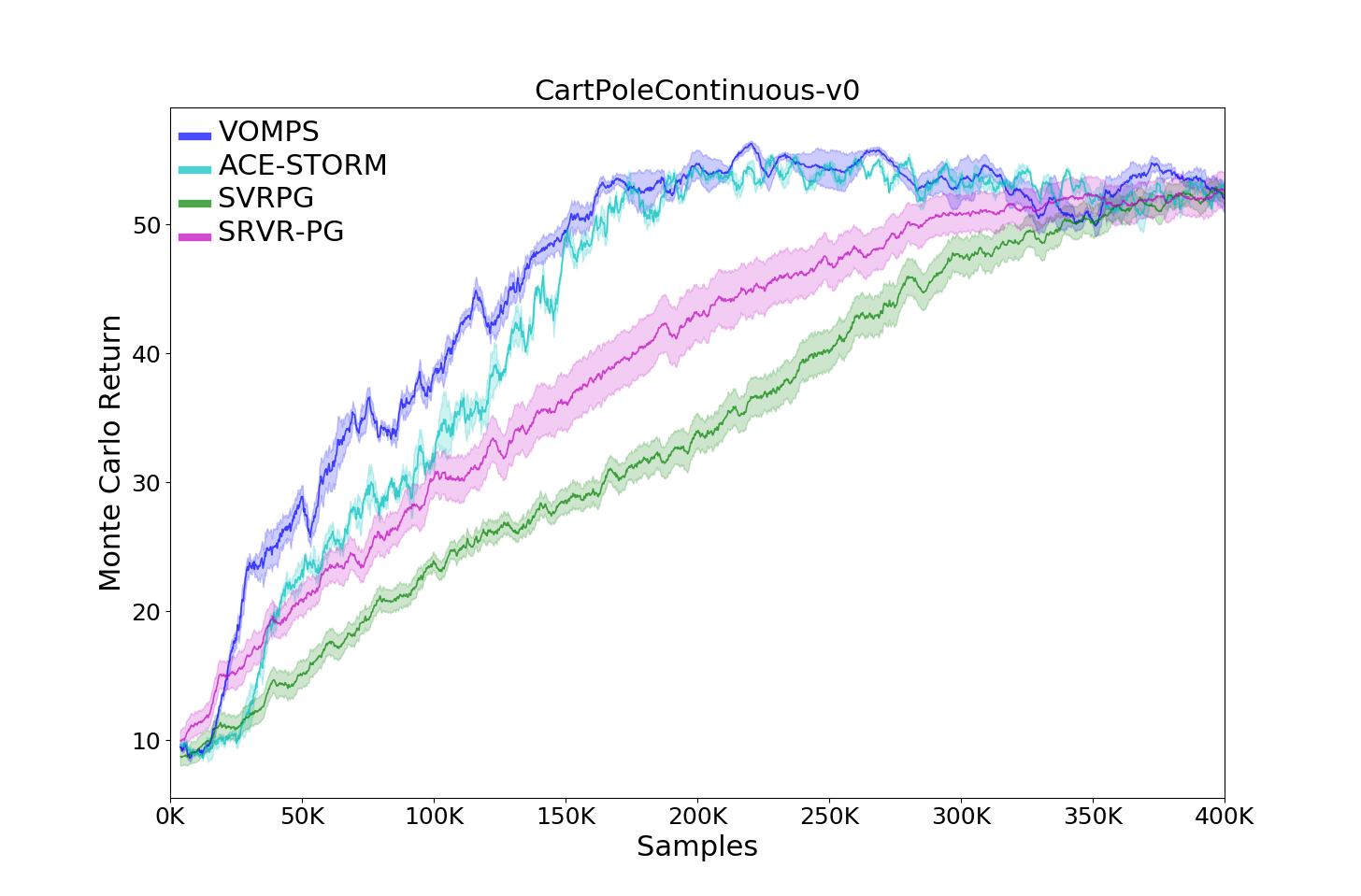}
  \caption{Comparison with on-policy methods}
  \label{fig:cartpole:onpol}
\end{subfigure}
\begin{subfigure}{.44\textwidth}
  \centering
  \includegraphics[height=3.5cm,width=1.\textwidth]{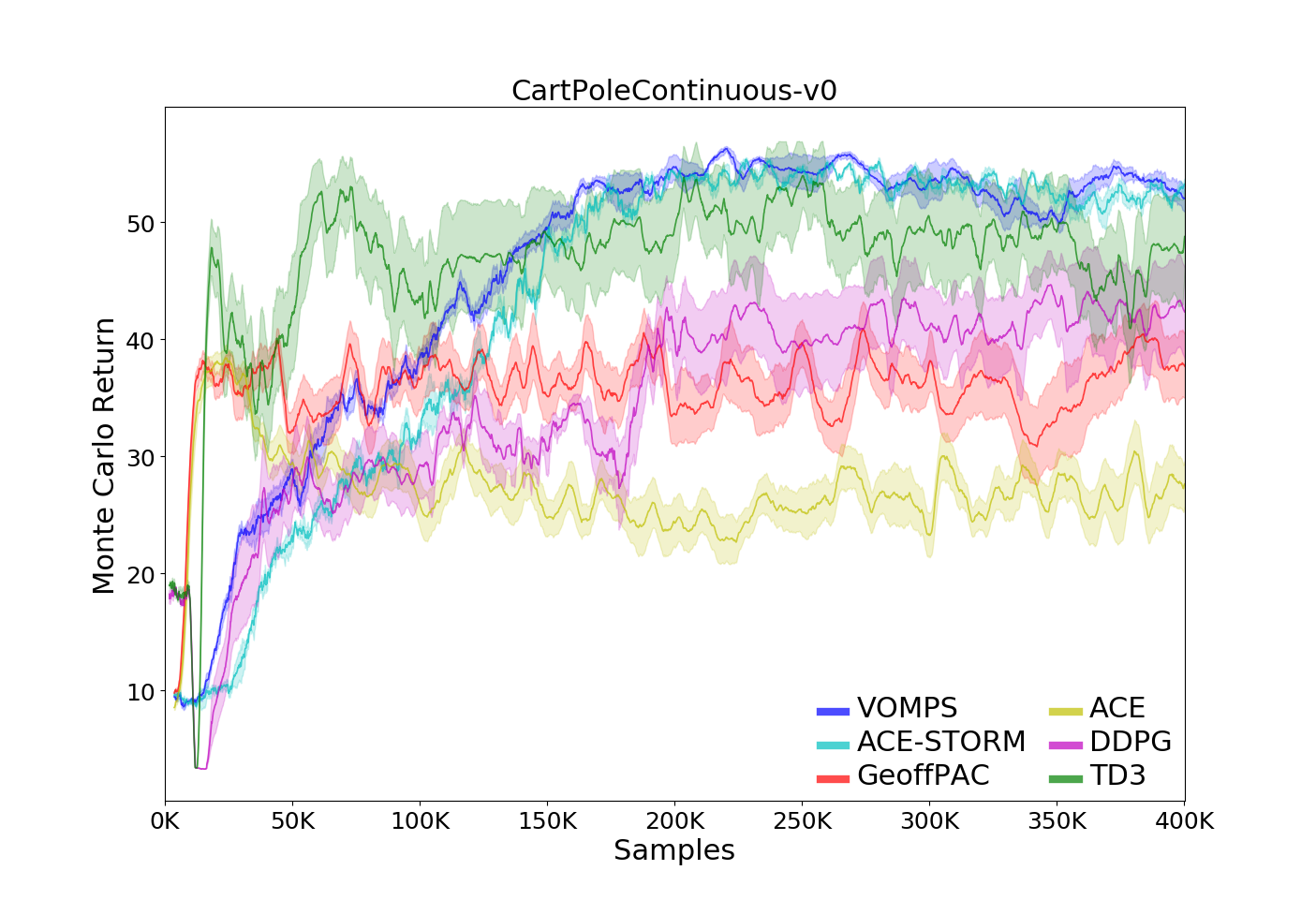}
  \caption{Comparison with off-policy methods}
  \label{fig:cartpole:offpol}
\end{subfigure}
\caption{Results on \texttt{CartPoleContinuous-v0}}
\end{figure*}

\subsection{Classic Control}
We use \texttt{CartPoleContinuous-v0} for CartPole domain, which has a continuous action space within the range of $[-1,1]$.
A near-optimal policy can reach a Monte-Carlo return at the level of $57$ within a fixed horizon of $200$ timesteps.
As shown in Fig.~\ref{fig:cartpole:onpol}, VOMPS and ACE-STORM learn the near-optimal policy with around $200$K samples, while SVRPG and SRVR-PG need more than $400$K samples with larger dynamic variances. As Fig.~\ref{fig:cartpole:offpol} shows, ACE, GeoffPAC, and DDPG do not perform well in this domain. Although TD3 seems to learn faster at the beginning, it reaches an inferior solution with a mean return around $50$ a higher variance than VOMPS and ACE-STORM.

\subsection{Mujoco Robot Simulation}
Experiments are also conducted on two benchmark domains provided by OpenAI Gym, including \texttt{Hopper-v2} and \texttt{HalfCheetah-v2}. 
As shown in Fig.~\ref{fig:hopper:onpol},~\ref{fig:hopper}, both GeoffPAC and VOMPS can achieve higher Monte Carlo returns than other methods and converge faster within $1$M samples on \texttt{Hopper-v2}. Compared with GeoffPAC, the learning curve of VOMPS is smoother and has a smaller variance. 
The results on \texttt{HalfCheetah-v2} are shown in Fig.~\ref{fig:halfcheetah:onpol},~\ref{fig:halfcheetah}. Fig.~\ref{fig:halfcheetah:onpol} indicates that VOMPS and ACE-STORM outperform SVRPG and SRVR-PG by a large margin, and Fig.~\ref{fig:halfcheetah} demonstrates that VOMPS and ACE-STORM achieve a similar performance of GeoffPAC/DDPG/TD3, with obviously smaller variances. We also observe that ACE does not perform well in general, and DDPG has a very large variance in these two domains.

In addition, a $20\%$ action noise is added to both the learning process and evaluation process in order to compare the noise resistance ability of different approaches (aka, the action is multiplied by a factor of $1\pm 0.2\chi$, where $\chi$ is drawn from a $[0,1]$-range uniform distribution). As shown in Fig.~\ref{fig:hopper:noise:onpol},~\ref{fig:hopper:noise:offpol},~\ref{fig:halfcheetah:noise:onpol},~\ref{fig:halfcheetah:noise:offpol}, compared with results under the original noise-free setting, VOMPS, ACE-STORM, SVRPG, and SRVR-PG tend to be insensitive to disturbances than other methods, which validates the effectiveness of the stochastic variance reduction component of these algorithms. In particular, VOMPS and ACE-STORM appear to be empirically the most noise-resistant in these two domains.

\begin{footnotesize}
\begin{figure*}[htb!]
\begin{subfigure}{.24\textwidth}
  \centering
  \includegraphics[width = 1\textwidth]{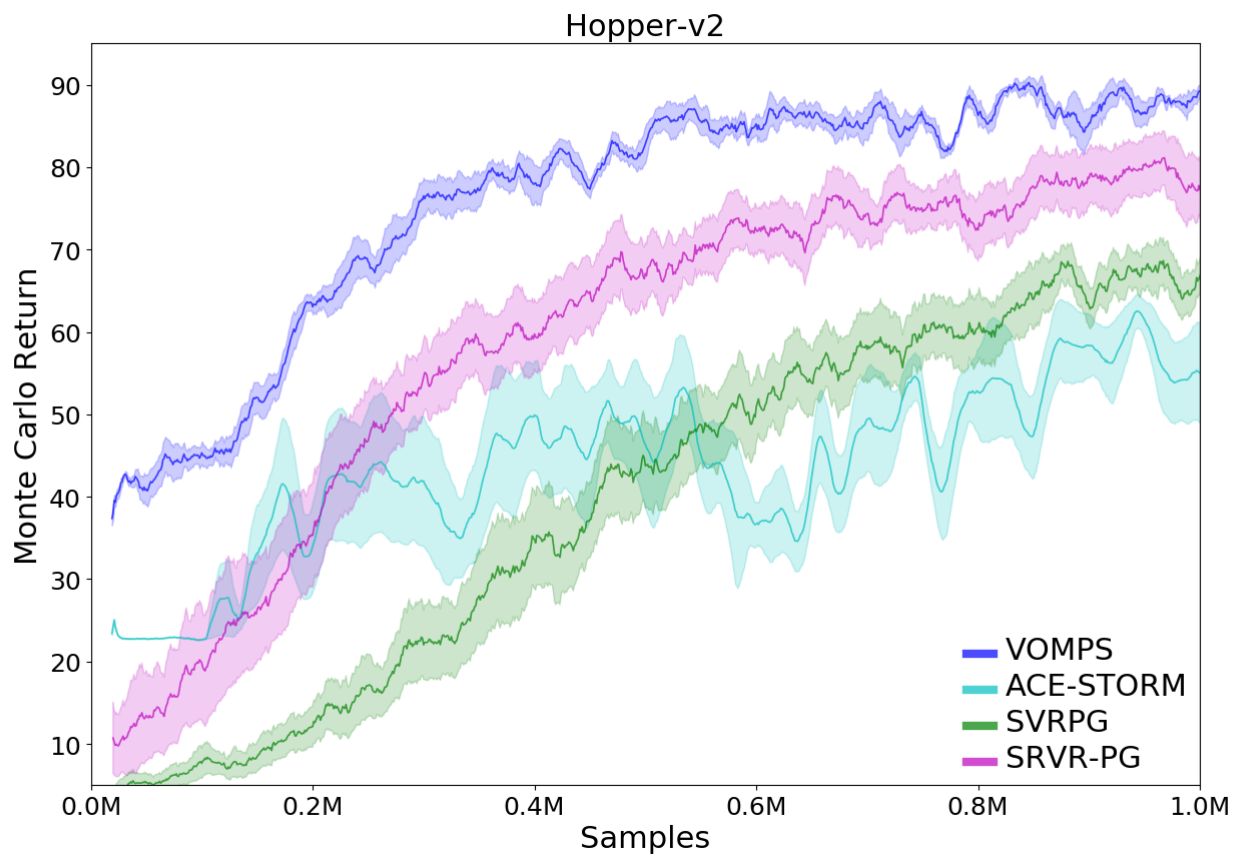}
  \caption{Hopper}
  \label{fig:hopper:onpol}
\end{subfigure}
\begin{subfigure}{.24\textwidth}
  \centering
  \includegraphics[width = 1\textwidth]{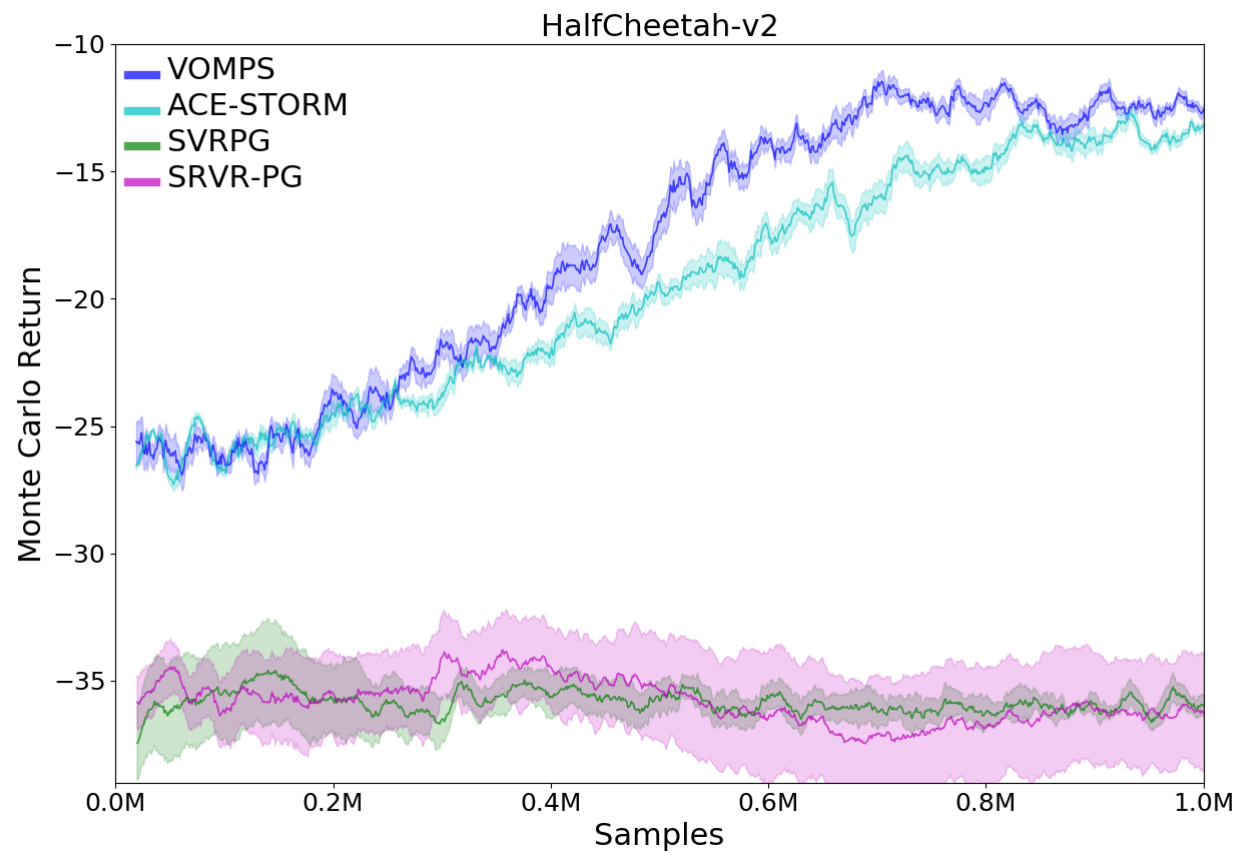}
    \caption{HalfCheetah}
    \label{fig:halfcheetah:onpol}
\end{subfigure}
\begin{subfigure}{.24\textwidth}
  \centering
  \includegraphics[width = 1\textwidth]{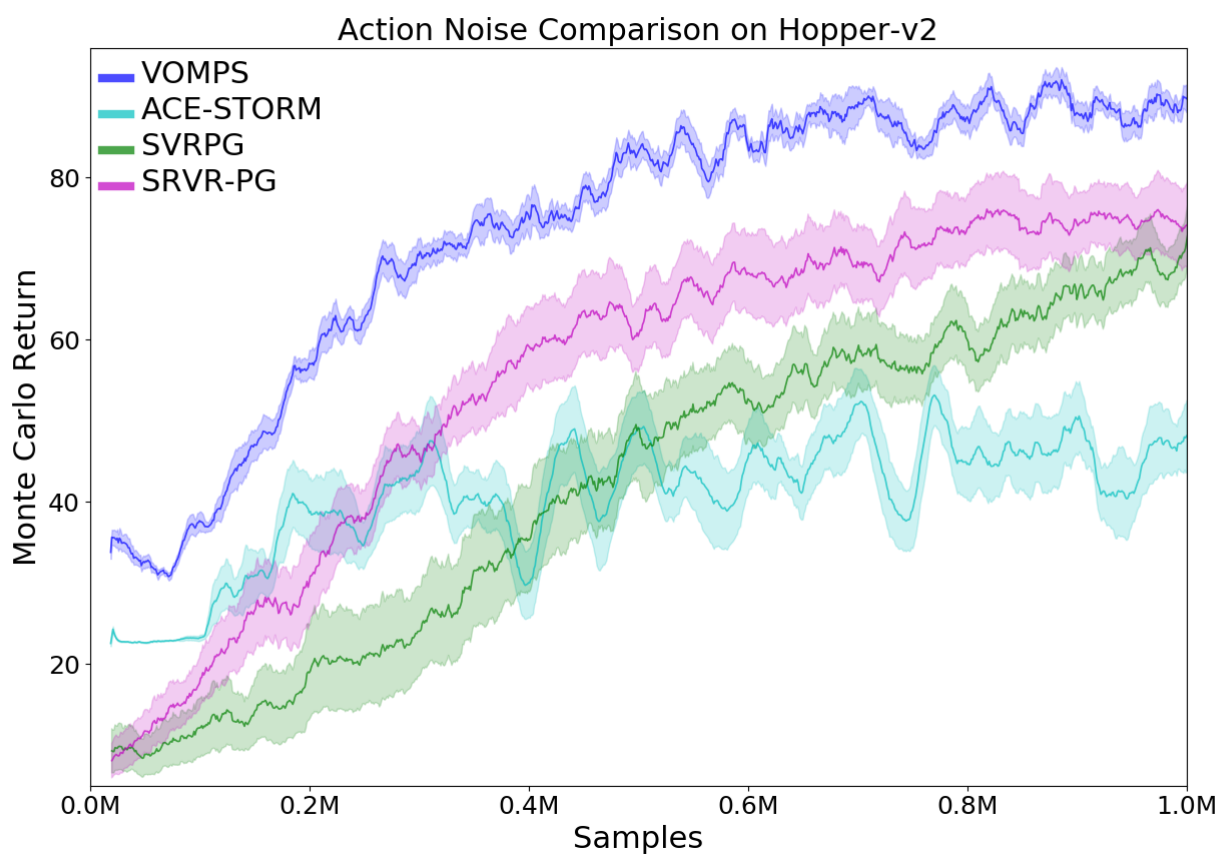}
  \caption{Hopper (action noise)}
  \label{fig:hopper:noise:onpol}
\end{subfigure}
\begin{subfigure}{.24\textwidth}
  \centering
  \includegraphics[width = 1\textwidth]{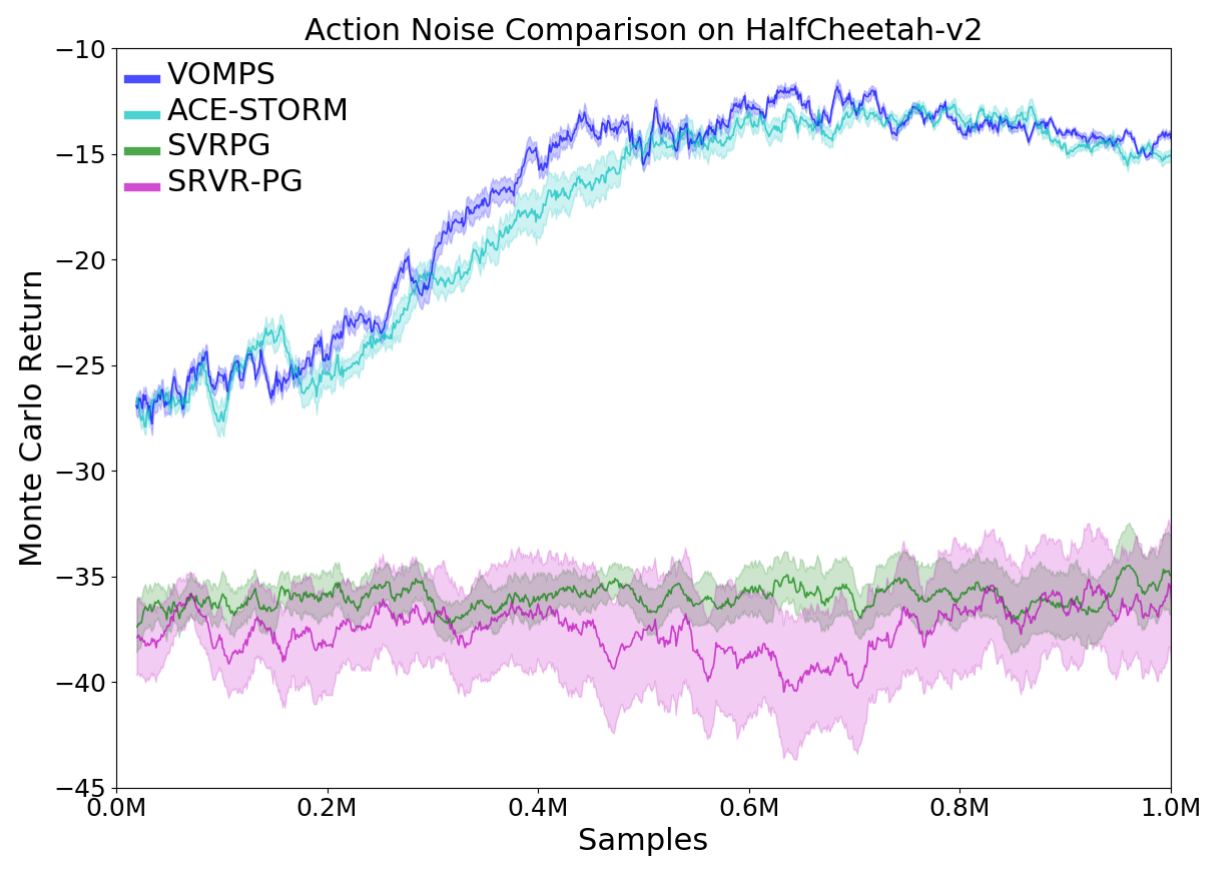}
  \caption{{\footnotesize HC (action noise)}}
  \label{fig:halfcheetah:noise:onpol}
\end{subfigure}
\caption{Comparison with on-policy PG methods (Mujoco), ``HC'' is short for HalfCheetah.}
\end{figure*}

\begin{figure*}[htb!]
\begin{subfigure}{.24\textwidth}
  \centering
  \includegraphics[width = 1\textwidth]{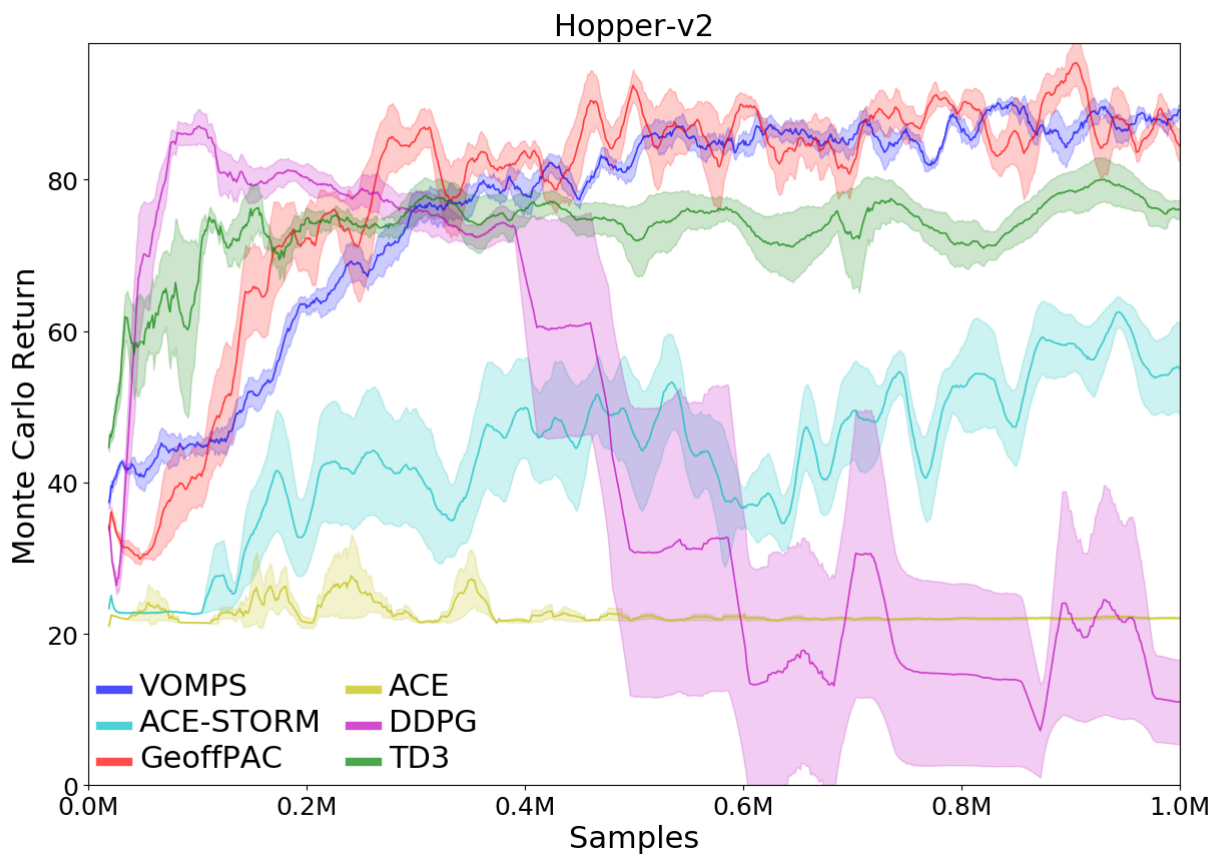}
  \caption{Hopper}
  \label{fig:hopper}
\end{subfigure}
\begin{subfigure}{.24\textwidth}
  \centering
  \includegraphics[width = 1\textwidth]{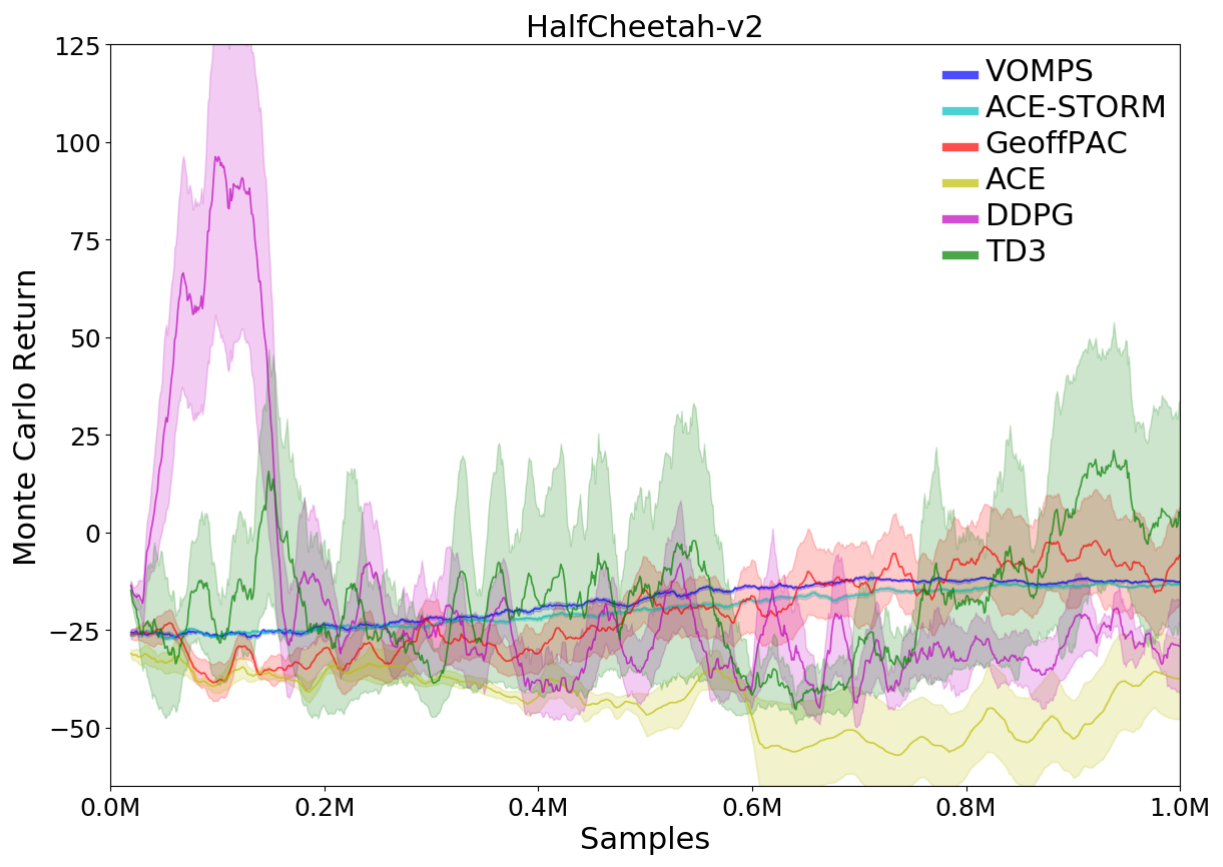}
    \caption{HalfCheetah}
    \label{fig:halfcheetah}
\end{subfigure}
\begin{subfigure}{.24\textwidth}
  \centering
  \includegraphics[width = 1\textwidth]{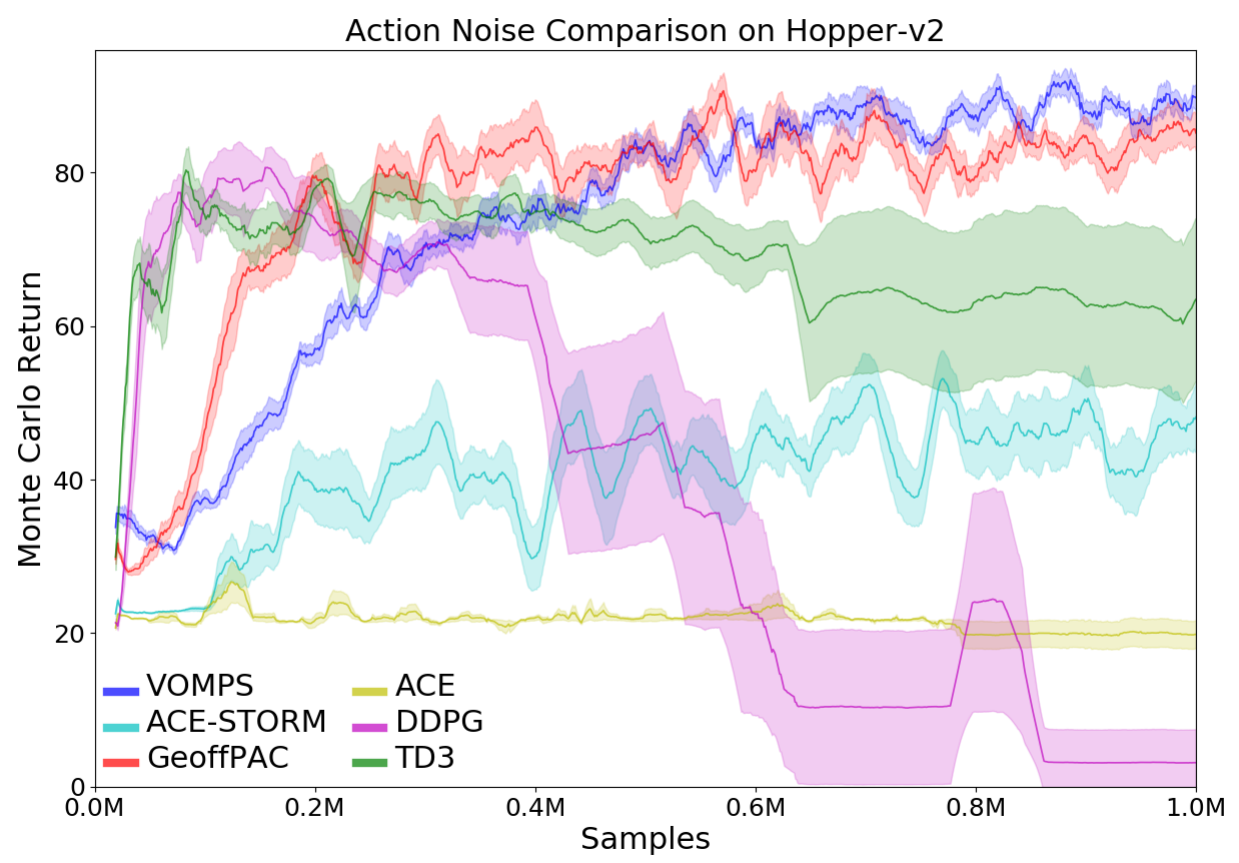}
  \caption{Hopper (action noise)}
  \label{fig:hopper:noise:offpol}
\end{subfigure}
\begin{subfigure}{.24\textwidth}
  \centering
  \includegraphics[width = 1\textwidth]{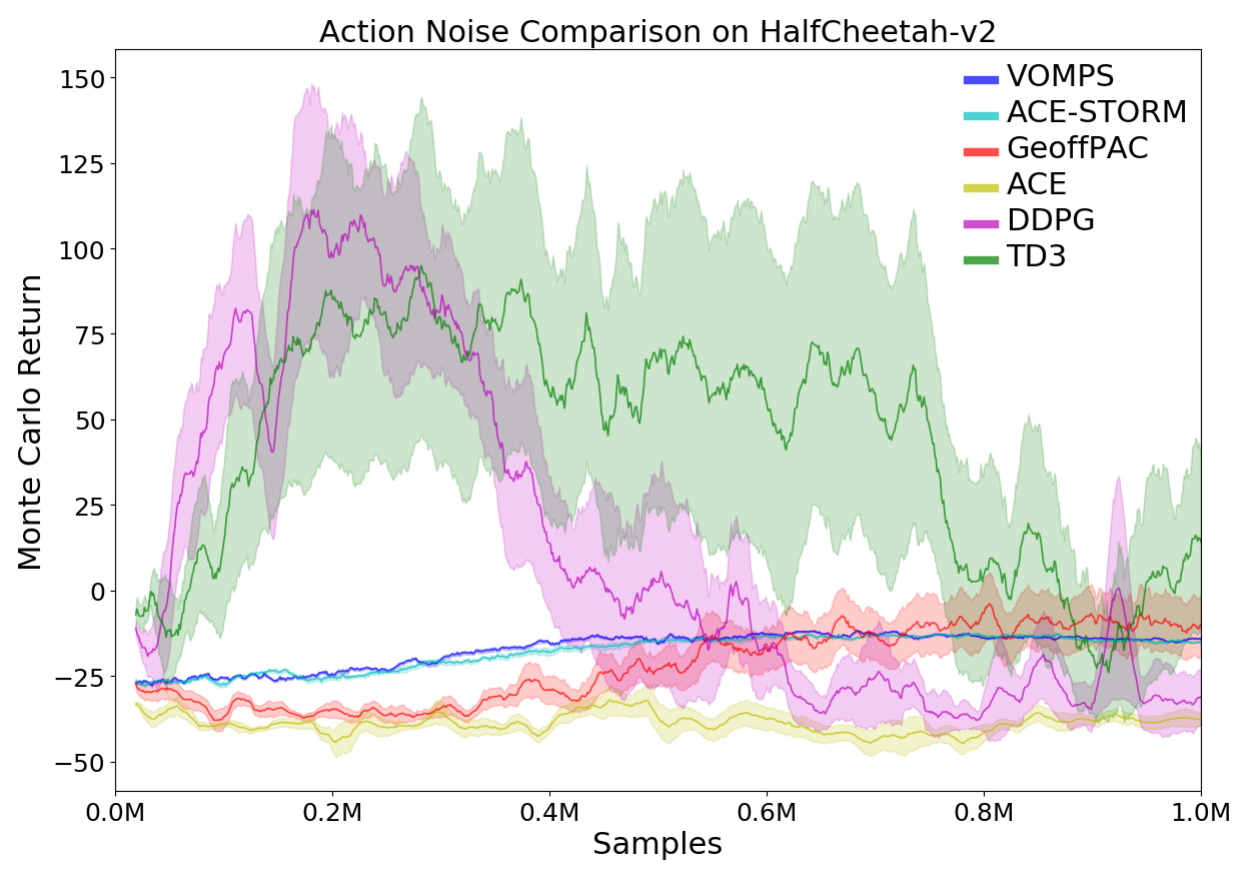}
    \caption{{\footnotesize HC (action noise)}}
  \label{fig:halfcheetah:noise:offpol}
\end{subfigure}
\caption{Comparison with off-policy PG methods (Mujoco), ``HC'' is short for HalfCheetah.}
\end{figure*}
\end{footnotesize}

\section{Related Work}
\label{sec:related}
Policy gradient methods and the corresponding actor-critic algorithm \citep{sutton2000policy,konda2002thesis} are popular policy search methods in RL, especially for continuous action setting.
However, this class of policy search algorithms suffers from large variance~\citep{pg:robotics:peters2006,deisenroth2013survey}. Several approaches have been proposed to reduce variance in policy search.
The first method family is to use control variate method, such as baseline removal~\citep{sutton2018reinforcement}, to remove a baseline function in the policy gradient estimation~\citep{weaver2001optimal,greensmith2002variance,gu2017q,tucker2018mirage}. 
The second method family is based on tweaking batch size, stepsize, and importance ratio used in policy search. In this research line, \citep{pirotta2013adaptive} proposed using an adaptive step size to offset the effect of the policy variance. \citet{pirotta2013adaptive, papini2017adaptive} studied the adaptive batch size and proposed to optimize the adaptive step size and batch size jointly, and \cite{metelli2018policy} investigated reducing variance via importance sampling.
 The third branch of methods is based on the recently developed \textit{stochastic variance reduction}~\citep{svrg,allen2016variance,reddi2016nips} methods as discussed above. Several variance-reduced policy gradient methods were proposed in this direction, such as SVRPG~\citep{svrpg}, SRVR-PG~\citep{sarahpg}, etc.

\section{Conclusion}\label{sec:conclusion}
In this paper, we present off-policy convergent, memory-efficient, and variance-reduced policy search algorithms by leveraging emphatic-weighted policy search and stochastic recursive momentum-based variance reduction. Experimental study validates the performance of the proposed approaches compared with existing on-policy variance-reduced policy search methods and off-policy policy search methods under different settings.
Future work along this direction includes integrating with baseline removal methods for further variance reduction and investigating algorithmic extensions to risk-sensitive policy search and control.

\bibliographystyle{apalike}
\bibliography{reference}

\clearpage
\begin{center}
{\Large \textbf{Appendix}}
\end{center}

\appendix
\section{Hyperparameters in Algorithm~\ref{alg:vomps}}
Hyper-parameters are presented below in the order of four main components--- updating the critic, the density ratio, the emphatic weights, and the actor. $\alpha_{\nu} \in [0,1]$ is the stepsize in the critic update; $\alpha_{\psi} \in [0,1]$ is the stepsize in the density ratio update; $\lambda^{(1)} \in [0,1]$, $\lambda^{(2)} \in [0,1]$ and $\hat{\gamma} \in [0,1)$ can be found more details in Appendix~\ref{sec:app:emphatic} for the emphatic weights update; $k$, $w$, and $\beta$ are inherited from STORM for the actor update. By default, $w$ is set as $10$ and $\beta =100$.

\section{Emphatic weights update component of GeoffPAC~\citep{geoffpac}}
\label{sec:app:emphatic}
 Figure~\ref{fig:geoffpac} contains the updates for the emphatic weights in GeoffPAC. In this figure, $\lambda^{(1)}$ and $\lambda^{(2)}$ are parameters that are used for bias-variance tradeoff, $C(s) = \frac{d_{\hat{\gamma}}(s)}{d_\mu(s)}$ is the density ration function (\citealt{gelada2019off} call it covariate shift), and $i(s)$ is the intrinsic interest function that is defined from the extrinsic interest function $\hat{i}(s)$ as $i(s)=C(s) \hat{i}(s)$. In practice, $\hat{i}(s) =1$. At time-step $t$, $F^{(1)}_t$ and $F^{(2)}_t$ are the follow-on traces, $M^{(1)}_t$ and $M^{(2)}_t$ are the emphatic weights, $I_t$ is the gradient of the intrinsic interest, $\delta_t$ is the temporal-difference (TD) error, and finally $Z_t$ is an unbiased sample of $\nabla J_{\hat{\gamma}}$. 
 For more details about these parameters and their update formulas, we refer the reader to the GeoffPAC paper~\citep{geoffpac}.
\begin{figure}[tbh]
\center
\fbox{
\begin{tabular}{p{13.2cm}}
  \textbf{HYPER-PARAMETER}: $\lambda^{(1)}, \lambda^{(2)}$.\\
  \textbf{INPUT}:$F^{(1)}_{t-1}, F^{(2)}_{t-1}, \rho_{t-1}, \rho_{t}, C(s_{t};\psi_{t}), V(s_{t};\nu_t), \delta_{t}, \hat{i}(s_{t})$.\\
  \textbf{OUTPUT}:$F^{(1)}_t, M^{(1)}_t, I_t, F^{(2)}_t, M^{(2)}_t,  Z_t(a_t,s_t;\theta_{t})$.
          \\Compute $F^{(1)}_t = \gamma \rho_{t-1}F^{(1)}_{t-1} + \hat{i}(s_{t}) C(s_{t};\psi_{t})$.  
          \\Compute $M^{(1)}_t = (1 - \lambda^{(1)}) \hat{i}(s_{t}) C(s_{t};\psi_{t}) + \lambda^{(1)} F^{(1)}_t$.  
          \\Compute $I_t = C(s_{t-1};\psi_{t-1}) \rho_{t-1} \nabla_{\theta} \log \pi(a_{t-1} | s_{t-1};\theta_{t-1})$.
          \\Compute $F^{(2)}_t = \hat{\gamma} \rho_{t-1} F^{(2)}_{t-1} + I_t$.
          \\Compute $M^{(2)}_t = (1 - \lambda^{(2)})I_t + \lambda^{(2)} F^{(2)}_t $. 
          \\Compute $Z_t(a_t,s_t;\theta_{t}) = \hat{\gamma} \hat{i}(s_{t}) V(s_{t};\nu_t)M^{(2)}_t + \rho_{t} M^{(1)}_t \delta_{t} \nabla_{\theta} \log \pi(a_{t} | s_{t};\theta_{t})$.
\end{tabular}}
\caption{Emphatic weights update component of GeoffPAC~\citep{geoffpac}}
\label{fig:geoffpac}
\end{figure}

\section{ACE-STORM Algorithm}
The pseudo-code of ACE-STORM is shown in Algorithm~\ref{alg:ace-storm}.

\begin{algorithm}[hb!]
\caption{ACE-STORM}
\label{alg:ace-storm}
\begin{algorithmic}
\STATE $V$: value function parameterized by $\nu$\;
\STATE $\pi$: policy function parameterized by $\theta$\; 
\STATE \textbf{Input}: Initial parameters $\nu_0$ and $\theta_0$. Initialize $F^{(1)}_{-1} = 0$, $\rho_{-1} = 1$, $i(\cdot) = 1$,  and hyper-parameters $\lambda^{(1)}$, $k$, $w$, $\beta$ and $\alpha_{\nu}$.
\FOR{timestep $t=0$ to $T$}  
  \STATE Sample a transition $S_t$, $A_t$, $R_t$, $S_{t+1}$ according to behavior policy $\mu$.
  \STATE Compute $\delta_{t} = R_{t} + \gamma V(S_{t+1}; \nu_{t}) - V(S_{t}; \nu_{t})$
  \STATE Update the parameter for value function:
    $\nu_{t+1} = \nu_{t} + \alpha_{\nu} \delta_{t} \nabla_{\nu}V(S_{t}; \nu_{t})$
        
  \STATE Compute $F^{(1)}_t = \gamma \rho_{t-1} F^{(1)}_{t-1} + i(S_t)$
  \STATE Compute $M^{(1)}_t = (1-\lambda^{(1)}) i(S_t) + \lambda^{(1)} F^{(1)}_t$

  \STATE Compute $Z^{(1)}_t(A_t,S_t;\theta_{t}) = \rho_{t} M^{(1)}_t \delta_{t} \nabla_{\theta} \log \pi(A_{t} |S_{t};\theta_{t})$.
        
  \STATE Compute $G_{t} = ||Z^{(1)}_t(A_t,S_t;\theta_{t})||$.
  \STATE Compute $\alpha_{t} = \beta \eta^2_{t-1}$
    \STATE Compute $Z^{(1)}_t(A_t,S_t;\theta_{t-1}) = \rho_{t} M^{(1)}_t \delta_{t} \nabla_{\theta} \log \pi(A_{t}|S_{t};\theta_{t-1})$.
    \STATE Compute $g_{t} = Z^{(1)}_t(A_t,S_t;\theta_{t}) + (1 - \alpha_t)\big(g_{t-1} - Z^{(1)}_t(A_t,S_t;\theta_{t-1}) \big)$.
  \STATE Compute $\eta_t = \frac{k}{(w + \sum^{t}_{i=1} {G^2_t})^\frac{1}{3}}$.
  \STATE Update  the parameter for the actor: $\theta_{t+1} = \theta_t + \eta_t g_{t}$
\ENDFOR
\STATE \textbf{Output I}: Parameters $\nu_{T+1}$, $\theta_{T+1}$.
\STATE \textbf{Output II}:Parameters $\nu_{T+1}$, $\theta_\tau$, where $\tau$ is sampled with a probability of $p(\tau  = t)\propto \frac{1}{\eta_t^2}$.
\end{algorithmic}
\end{algorithm}

\section{Comparison of Stochastic Variance Reduction Methods}
\label{sec:compare-svr}
This table is adapted from \citep{storm}.
\begin{table}[ht]
\begin{footnotesize}
\begin{center}
\begin{tabular}{ccccc}
\toprule
Algorithms & &Sample Complexity  & Reference Sets Needed?  \\
\toprule
\multirow{2}{*}{SVRG} &\citep{reddi2016icml}  &\multirow{2}{*}{$O(n^{2/3}/\epsilon)$}& \multirow{2}{*}{$O(1/\epsilon)$} \\
&\citep{allen2016variance}  & &\\ \midrule
SARAH &\citep{sarah,nguyen2017stochastic} &  $O(n+1/\epsilon^2)$  & \checkmark\\ \midrule
SPIDER &\citep{spider}  &$O(1/\epsilon^{3/2})$  & \checkmark\\ \midrule
STORM &\citep{storm} &$O(1/\epsilon^{3/2})$ &  $\times$\\
\bottomrule
\end{tabular}
\end{center}
\caption{Comparison of convergence rates to
achieve $||
\nabla J(x)||^2\leq \epsilon$ for \textit{nonconvex} objective functions.}
\label{table:complexity_2}
\end{footnotesize}
\end{table}

\section{Proof of Theorem~\ref{thm:vomps}}
Before conducting the proof, we first denote  $\epsilon_t$: $\epsilon_t = g_t - \nabla J_{\hat{\gamma}}(\theta_t)$.
\begin{lemma}
\label{lemma:obj}
Suppose $\eta_t\le \frac{1}{4L}$ for all $t$. Then
\begin{align}
    \mathbb{E}\big[J_{\hat{\gamma}}(\theta_{t}) - J_{\hat{\gamma}}(\theta_{t+1}) \big]
\le \mathbb{E} \big[- \eta_t/4 \|\nabla J_{\hat{\gamma}}(\theta_t)\|^2 + 3\eta_t/4 \|\epsilon_t\|^2 \big]
\end{align}
\end{lemma}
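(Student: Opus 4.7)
The plan is to derive a standard ``descent lemma'' (really an ``ascent lemma'' since Algorithm~\ref{alg:vomps} is maximizing $J_{\hat\gamma}$), starting from the $L$-smoothness of $J_{\hat\gamma}$ and using the decomposition $g_t = \nabla J_{\hat\gamma}(\theta_t) + \epsilon_t$. Note that Assumption~2 gives Lipschitz continuity of the gradient estimator $Z(\theta)$, and since $\nabla J_{\hat\gamma}(\theta) = \mathbb{E}[Z(\theta)]$, Jensen's inequality immediately transfers the constant $L$ to $\nabla J_{\hat\gamma}$ itself, so smoothness is available at no cost.

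First I would invoke the smoothness lower bound at $(\theta_t, \theta_{t+1})$ and plug in the update $\theta_{t+1} - \theta_t = \eta_t g_t$:
\begin{align}
J_{\hat\gamma}(\theta_{t+1}) \;\ge\; J_{\hat\gamma}(\theta_t) + \eta_t \langle \nabla J_{\hat\gamma}(\theta_t), g_t\rangle - \tfrac{L\eta_t^2}{2}\|g_t\|^2.
\end{align}
Writing $g_t = \nabla J_{\hat\gamma}(\theta_t) + \epsilon_t$, the inner product becomes $\|\nabla J_{\hat\gamma}(\theta_t)\|^2 + \langle \nabla J_{\hat\gamma}(\theta_t), \epsilon_t\rangle$, and the squared norm expands as $\|g_t\|^2 \le 2\|\nabla J_{\hat\gamma}(\theta_t)\|^2 + 2\|\epsilon_t\|^2$.

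Next I would apply Young's inequality with the balanced choice $\langle \nabla J_{\hat\gamma}(\theta_t), \epsilon_t\rangle \ge -\tfrac12\|\nabla J_{\hat\gamma}(\theta_t)\|^2 - \tfrac12\|\epsilon_t\|^2$. Collecting the coefficients on $\|\nabla J_{\hat\gamma}(\theta_t)\|^2$ yields $\eta_t - \tfrac{\eta_t}{2} - L\eta_t^2 = \tfrac{\eta_t}{2} - L\eta_t^2$, while the coefficient on $\|\epsilon_t\|^2$ is $-(\tfrac{\eta_t}{2} + L\eta_t^2)$. The hypothesis $\eta_t \le 1/(4L)$ gives $L\eta_t^2 \le \eta_t/4$, so these coefficients are bounded below by $\eta_t/4$ and above by $-3\eta_t/4$ respectively. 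Rearranging $J_{\hat\gamma}(\theta_{t+1}) - J_{\hat\gamma}(\theta_t) \ge \tfrac{\eta_t}{4}\|\nabla J_{\hat\gamma}(\theta_t)\|^2 - \tfrac{3\eta_t}{4}\|\epsilon_t\|^2$ and taking expectations produces exactly the inequality in the lemma.

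There is no real obstacle here; the entire proof is a calibration of two applications of Young's inequality so that the stepsize budget $\eta_t \le 1/(4L)$ exactly absorbs the quadratic $L\eta_t^2$ corrections into the target $1/4$ and $3/4$ constants. The only thing to keep an eye on is a consistent sign convention for maximization versus minimization, since the STORM paper is phrased for minimization; handling this amounts to flipping the smoothness inequality to the lower bound form and noting that the Young's inequality split I use (with weight $1/2$ on each term) is tight enough to reach the stated constants rather than the cruder $-\eta_t/2, +\eta_t/2$ split that would come from the ``complete the square'' identity $\langle \nabla J, g_t\rangle = \tfrac12\|\nabla J\|^2 + \tfrac12\|g_t\|^2 - \tfrac12\|\epsilon_t\|^2$.
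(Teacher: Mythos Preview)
Your proposal is correct and follows essentially the same route as the paper: smoothness of $J_{\hat\gamma}$, the decomposition $g_t=\nabla J_{\hat\gamma}(\theta_t)+\epsilon_t$, Young's inequality with weight $1/2$ on the cross term, the crude bound $\|g_t\|^2\le 2\|\nabla J_{\hat\gamma}(\theta_t)\|^2+2\|\epsilon_t\|^2$, and finally $L\eta_t^2\le \eta_t/4$ to land on the $1/4$ and $3/4$ constants. The only cosmetic difference is that the paper works with $-J_{\hat\gamma}$ (upper-bound form of smoothness) whereas you work directly with the lower-bound form; your explicit justification that $L$-Lipschitzness of $Z$ transfers to $\nabla J_{\hat\gamma}$ via Jensen is a welcome addition that the paper leaves implicit.
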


\begin{proof}[Proof of Lemma~\ref{lemma:obj}] According to the smoothness of $J_{\hat{\gamma}}$,
\begin{equation}
\begin{aligned}
\big[-J_{\hat{\gamma}}(\theta_{t+1})]
&\leq \E[- J_{\hat{\gamma}}(\theta_t)  -\nabla J_{\hat{\gamma}}(\theta_t)\cdot \eta_tg_t + \frac{L\eta_t^2}{2}\|g_t\|^2 \big]\\
&=\E[- J_{\hat{\gamma}}(\theta_t) - \eta_t\|\nabla J_{\hat{\gamma}}(\theta_t)\|^2 - \eta_t\nabla J_{\hat{\gamma}}(\theta_t)\cdot\epsilon_t + \frac{L\eta_t^2}{2}\|g_t\|^2 \big] \\
&\leq \E[- J_{\hat{\gamma}}(\theta_t) - \frac{\eta_t}{2}\|\nabla J_{\hat{\gamma}}(\theta_t)\|^2 +\frac{\eta_t}{2}\| \epsilon_t\|^2 + \frac{L\eta_t^2}{2}\|g_t\|^2 \big] \\
&\leq  \E[- J_{\hat{\gamma}}(\theta_t) - \frac{\eta_t}{2}\|\nabla J_{\hat{\gamma}}(\theta_t)\|^2 +\frac{\eta_t}{2}\| \epsilon_t\|^2  + L\eta_t^2\| \epsilon_t\|^2 + L\eta_t^2\|\nabla J_{\hat{\gamma}}(\theta_t)\|^2 \big]\\
&\leq  \E[- J_{\hat{\gamma}}(\theta_t) - \frac{\eta_t}{2}\|\nabla J_{\hat{\gamma}}(\theta_t)\|^2 +\frac{3\eta_t}{4}\| \epsilon_t\|^2 + \frac{\eta_t}{4}\|J_{\hat{\gamma}}(\theta_t)\|^2 \\
\end{aligned}
\end{equation}

\end{proof}

The following technical observation is key to our analysis: it provides a recurrence that enables us to bound the variance of the estimates $g_t$.
\begin{lemma}
\label{lemma:epsilonrecursion}
With the notation in Algorithm, we have
\begin{align}
& \mathbb{E} \big[\|\epsilon_t\|^2/\eta_{t-1} \big] \\
\leq &\mathbb{E} \big[2 \beta^2 \eta_{t-1}^3 \sigma^2 + (1-\alpha_t)^2 (1+4 L^2 \eta_{t-1}^2)\|\epsilon_{t-1}\|^2/\eta_{t-1}+4 (1-\alpha_t)^2 L^2 \eta_{t-1}\|\nabla J_{\hat{\gamma}}(\theta_{t-1})\|^2 \big] .
\end{align}
\end{lemma}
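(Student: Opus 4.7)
The plan is to derive the stated one-step recursion by unrolling the STORM gradient update, decomposing $\epsilon_t$ into a conditionally deterministic piece plus a conditionally mean-zero piece, and then invoking the bounded-variance and Lipschitz assumptions on $Z_t$. Concretely, using the update $g_t = Z_t(\theta_t) + (1-\alpha_t)(g_{t-1} - Z_t(\theta_{t-1}))$ from Eq.~\eqref{eq:gt_general}, I would add and subtract $(1-\alpha_t)\nabla J_{\hat\gamma}(\theta_{t-1})$ and $\nabla J_{\hat\gamma}(\theta_t)$ to rewrite
\begin{align}
\epsilon_t = (1-\alpha_t)\epsilon_{t-1} + \alpha_t\bigl(Z_t(\theta_t)-\nabla J_{\hat\gamma}(\theta_t)\bigr) + (1-\alpha_t)\bigl[\bigl(Z_t(\theta_t)-Z_t(\theta_{t-1})\bigr)-\bigl(\nabla J_{\hat\gamma}(\theta_t)-\nabla J_{\hat\gamma}(\theta_{t-1})\bigr)\bigr].
\end{align}
Let $\mathcal{F}_{t-1}$ denote the $\sigma$-algebra generated by the samples up through step $t-1$. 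Then $(1-\alpha_t)\epsilon_{t-1}$ is $\mathcal{F}_{t-1}$-measurable, while the remaining two terms are conditionally mean zero because $Z_t(\theta)$ is an unbiased estimator of $\nabla J_{\hat\gamma}(\theta)$ at every $\theta$, as noted after Eq.~\eqref{eq:grad_general}.

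Next I would take $\mathbb{E}[\|\epsilon_t\|^2\mid \mathcal{F}_{t-1}]$ so the cross term between the deterministic and mean-zero parts vanishes, and bound the squared norm of the mean-zero part using $\|x+y\|^2\le 2\|x\|^2+2\|y\|^2$. The first summand is controlled by the bounded-variance part of Assumption on $Z$: $\mathbb{E}\|Z_t(\theta_t)-\nabla J_{\hat\gamma}(\theta_t)\|^2 \le \sigma^2$. For the second summand I would drop the subtracted $\nabla J_{\hat\gamma}$ differences (using $\mathbb{E}\|X-\mathbb{E}X\|^2\le \mathbb{E}\|X\|^2$) and then apply the sample-wise Lipschitz property $\|Z_t(\theta_t)-Z_t(\theta_{t-1})\|\le L\|\theta_t-\theta_{t-1}\|$, which is the natural reading of the Lipschitz part of the assumption when $Z_t$ refers to the same sampled trajectory. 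Since $\theta_t-\theta_{t-1}=\eta_{t-1}g_{t-1}$ and $g_{t-1}=\nabla J_{\hat\gamma}(\theta_{t-1})+\epsilon_{t-1}$, another $\|x+y\|^2\le 2\|x\|^2+2\|y\|^2$ gives $\|\theta_t-\theta_{t-1}\|^2 \le 2\eta_{t-1}^2(\|\nabla J_{\hat\gamma}(\theta_{t-1})\|^2+\|\epsilon_{t-1}\|^2)$.

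Assembling these pieces yields
\begin{align}
\mathbb{E}\bigl[\|\epsilon_t\|^2\mid\mathcal{F}_{t-1}\bigr] \le (1-\alpha_t)^2(1+4L^2\eta_{t-1}^2)\|\epsilon_{t-1}\|^2 + 2\alpha_t^2\sigma^2 + 4(1-\alpha_t)^2 L^2\eta_{t-1}^2\|\nabla J_{\hat\gamma}(\theta_{t-1})\|^2.
\end{align}
Dividing by $\eta_{t-1}$ and substituting $\alpha_t=\beta\eta_{t-1}^2$ from Eq.~\eqref{eq:stepsize} turns $2\alpha_t^2\sigma^2/\eta_{t-1}$ into the stated $2\beta^2\eta_{t-1}^3\sigma^2$ term; taking outer expectation then gives exactly the claim.

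The main obstacle I anticipate is the care needed in the decomposition and in justifying that the Lipschitz constant $L$ transfers from the population object $Z(\theta)$ in the assumption to the single-sample map $\theta\mapsto Z_t(\theta)$ that uses the fixed realization $(s_t,a_t)$; this is required to control $Z_t(\theta_t)-Z_t(\theta_{t-1})$ in terms of $\|\theta_t-\theta_{t-1}\|$. Once that sample-wise reading is accepted, the remaining work is the double application of $\|x+y\|^2\le 2\|x\|^2+2\|y\|^2$ and careful tracking of the $\eta_{t-1}$ factors introduced by the recursion.
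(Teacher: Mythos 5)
Your proposal is correct and takes essentially the same route as the paper, which does not write out this proof but defers to Lemma~2 of the STORM paper; your decomposition of $\epsilon_t$ into the $\mathcal{F}_{t-1}$-measurable part $(1-\alpha_t)\epsilon_{t-1}$ plus two conditionally mean-zero terms, followed by the variance bound, the sample-wise Lipschitz bound on $Z_t(\theta_t)-Z_t(\theta_{t-1})$, and the substitution $\alpha_t=\beta\eta_{t-1}^2$, is exactly that argument transplanted to the $Z_t$ notation. The caveat you flag about reading the Lipschitz assumption sample-wise is the right one to note, and it is indeed how the paper (implicitly, via STORM's per-sample smoothness assumption) must be read for the step bounding $\|Z_t(\theta_t)-Z_t(\theta_{t-1})\|$ to go through.
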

The proof of Lemma~\ref{lemma:epsilonrecursion} is identical to the proof of Lemma 2 in \citep{storm}.

\begin{proof}[Proof of Theorem~\ref{thm:vomps}]
We first construct a Lyapunov function of  $\Phi_t = J_{\hat{\gamma}}(\theta_t) + \frac{1}{32L^2 \eta_{t-1}}\|\epsilon_t\|^2$. We will upper bound $\Phi_{t+1} - \Phi_t$ for each $t$, which will allow us to bound $\Phi_T$ in terms of $\Phi_1$ by summing over $t$. First, observe that since $w \geq (4Lk)^3$, we have $\eta_{t}\leq \frac{1}{4L}$. Further, since $\alpha_{t+1}=\beta \eta_t^2$, we have $\alpha_{t+1}\le \frac{\beta k}{4 L w^{1/3}}\leq 1$ for all $t$.
Then, we first consider $\eta_{t}^{-1}\|\epsilon_{t+1}\|^2 - \eta_{t-1}^{-1}\|\epsilon_t\|^2$. Using Lemma~\ref{lemma:epsilonrecursion}, we obtain
\begin{align*}
& \mathbb{E}\big[\eta_{t}^{-1}\|\epsilon_{t+1}\|^2 - \eta_{t-1}^{-1}\|\epsilon_t\|^2 \big] \\
\leq & \mathbb{E} \big[2 c^2 \eta_{t}^3 G^2 + \frac{(1-\alpha_{t+1})^2 (1+4 L^2 \eta_{t}^2)\|\epsilon_{t}\|^2}{\eta_{t}}
+4 (1-\alpha_{t+1})^2 L^2 \eta_{t}\|\nabla J_{\hat{\gamma}}(\theta_{t})\|^2 - \frac{\|\epsilon_t\|^2}{\eta_{t-1}}\big]\\
\leq & \mathbb{E} \bigg[\underbrace{2 c^2 \eta_{t}^3 G^2}_{A_t}+\underbrace{\big(\eta_{t}^{-1}(1-\alpha_{t+1})(1+4 L^2 \eta_{t}^2) - \eta_{t-1}^{-1}\big)\|\epsilon_t\|^2}_{B_t} + \underbrace{4 L^2 \eta_{t} \|\nabla J_{\hat{\gamma}}(\theta_{t})\|^2}_{C_t} \bigg] .
\end{align*}

Let start with upper bounding the second term $B_t$
we have
\begin{align*}
B_t \leq (\eta_{t}^{-1} - \eta_{t-1}^{-1}  +  \eta_{t}^{-1}(4L^2 \eta_{t}^2 - \alpha_{t+1}) )\|\epsilon_t\|^2
=\big(\eta_{t}^{-1} - \eta_{t-1}^{-1} + \eta_t(4L^2- \beta)\big)\|\epsilon_t\|^2~.
\end{align*}
Let us focus on $\frac{1}{\eta_t} - \frac{1}{\eta_{t-1}}$ for a minute. Using the concavity of $x^{1/3}$, we have $(x+y)^{1/3}\le x^{1/3} + yx^{-2/3}/3$. Therefore:
\begin{equation}
    \begin{aligned}
        \frac{1}{\eta_t} - \frac{1}{\eta_{t-1}}& = \frac{1}{k}\Big (w+\sum_{i=1}^tG_i^2\Big )^{1/3} - \frac{1}{k}\Big(w+\sum_{i=1}^{t-1}G_i^2\Big)^{1/3} \leq \frac{G_t^2}{3k(w+\sum_{i=1}^{t-1}G_i^2)^{2/3}} \\
        & \leq \frac{G_t^2}{3k(w-G^2+\sum_{i=1}^{t}G_i^2)^{2/3}} \leq \frac{G_t^2}{3k(w/2+\sum_{i=1}^{t}G_i^2)^{2/3}} \\
        & \leq \frac{2^{2/3}G_t^2}{3k(w+\sum_{i=1}^{t}G_i^2)^{2/3}} \leq \frac{2^{2/3}G_t^2}{3k^3}\eta_t^2\leq \frac{2^{2/3}G^2}{12Lk^3}\eta_t\leq \frac{G^2}{7Lk^3}\eta_t 
    \end{aligned}
\end{equation}
where we have used that that $w\geq (4Lk)^3$ to have $\eta_{t}\leq \frac{1}{4L}$.

Further, since $\beta =28 L^2 + G^2/(7 L k^3)$, we have
\begin{align*}
\eta_t(4L^2-\beta) 
\leq - 24 L^2 \eta_t  - G^2 \eta_t /(7 L k^3) .
\end{align*}
Thus, we obtain 
\begin{equation}
\begin{aligned}
B_t \leq  - 24 L^2 \eta_t\|\epsilon_t\|^2
\end{aligned}
\end{equation}

Now, we are ready to analyze the potential $\Phi_t$. Since $\eta_{t}\leq \frac{1}{4L}$, we can use Lemma~\ref{lemma:obj} to obtain
\begin{align*}
\mathbb{E}[\Phi_{t}-\Phi_{t+1}]
&\leq \mathbb{E} \left[- \frac{\eta_t}{4} \|\nabla J_{\hat{\gamma}}(\theta_t)\|^2 + \frac{3\eta_t}{4}\|\epsilon_t\|^2 + \frac{1}{32L^2 \eta_{t}}\|\epsilon_{t+1}\|^2 - \frac{1}{32L^2 \eta_{t-1}}\|\epsilon_t\|^2\right]~.
\end{align*}
Summing over $t$, we obtain
Rearranging terms we get,
\begin{equation}
\begin{aligned}
\E[\frac{\eta_t}{8}\|\nabla J_{\hat{\gamma}}(\theta_t)\|^2]&\leq \E[\Phi_{t+1}-\Phi_t] + \E[\frac{\beta^2\eta_t^3G^2}{16L^2}]\\
\Longleftrightarrow
\E[\frac{1}{8\eta_t^2}\|\nabla J_{\hat{\gamma}}(\theta_t)\|^2] &\leq \E[\frac{1}{8\eta_t^3}[ \Phi_{t+1}-\Phi_t]] +\frac{\beta^2G^2}{16L^2}\\
\end{aligned}
\end{equation}
Summing over $1,\cdots, t$, we have
\begin{equation}
    \begin{aligned}
    \sum\limits_{t=1}^T\E[\frac{1}{\eta_t^2}\|\nabla J_{\hat{\gamma}}(\theta_t)\|^2]&\leq \sum\limits_{t=1}^T\E[\frac{8}{\eta_t^3}[\Phi_{t+1}-\Phi_t]]+\frac{G^2T}{2L^2}\\
    \Longleftrightarrow 
    \sum\limits_{t=1}^T\E[\frac{1}{\eta_t^2}\| \nabla J_{\hat{\gamma}}(\theta_t)\|^2]&\leq \sum\limits_{t=1}^T\E[\frac{8}{\eta_t^3}[\Phi_{t+1}-\Phi_t]]+\frac{\beta^2G^2T}{2L^2}\\
    \Longleftrightarrow 
    \sum\limits_{t=1}^T\mathcal{W}_{1t} \E [\|\nabla J_{\hat{\gamma}}(\theta_t)\|^2] & \leq \sum\limits_{t=1}^T 8\mathcal{W}_{2t} \E[\Phi_{t+1}-\Phi_t] + \frac{\beta^2G^2T}{2L^2}\\
    \end{aligned}
\end{equation}
As $G_{t+1}^2 \leq G^2$, therefore $\eta_t \sim \Omega ((\frac{k}{w+tG^2})^{1/3})$. As a result, 
 $\mathcal{W}_{1t} = \frac{1}{\eta_t^2} = \frac{(w+t G^2)^{2/3}}{k^2}\sim O(t^{2/3})$, $\mathcal{W}_{2t} = \frac{1}{\eta_t^3} = \frac{(w+t G^2)}{k^3}\sim O(t)$.
\begin{equation}
    \begin{aligned}
       \sum\limits_{t=1}^T t\E[\Phi_{t+1} -\Phi_t]  &= \sum\limits_{t=1}^T \E[(t + 1)\Phi_{t+1} - (t)\Phi_{t}] -\sum\limits_{t=1}^{T}\Phi_{t+1}\\
       &= (T+1)\Phi_{T+1} - \Phi_{1} - \sum\limits_{t=1}^T \Phi_{t+1} = \sum\limits_{t=1}^{T+1} (\Phi_{T+1} -\Phi_{t}) \leq (T+1) \Delta_{\Phi}
    \end{aligned}
\end{equation}
where $\Delta_{\Phi} \leq \Delta_{J_{\hat{\gamma}}} + \frac{\|\epsilon_0\|^2}{32\eta_0L^2},\Delta_{J_{\hat{\gamma}}} =   J_{\hat{\gamma}}(\theta^*)-J_{\hat{\gamma}}(\theta), \forall \theta\in R^d$, and $\theta^\star$ is the maximizer of $J_{\hat{\gamma}}$.
\begin{equation}
\begin{aligned}
\sum\limits_{t=1}^T \mathcal{W}_{1t} = \sum\limits_{t=1}^T t^{2/3}\geq \int_{t=1}^T t^{2/3}dt = \frac{3}{5}(T^{5/3} - 1)\geq \frac{2}{5}T^{5/3}.
\end{aligned}
\end{equation}
Then we have
\begin{equation}
    \begin{aligned}
      \frac{\sum\limits_{t=1}^T\mathcal{W}_{1t}\E [\|\nabla J_{\hat{\gamma}}(\theta_t)\|^2}{\sum\limits_{t=1}^T \mathcal{W}_{1t}}  & \leq \frac{\sum\limits_{t=1}^T 8\mathcal{W}_{2t} \E[\Phi_t -\Phi_{t+1}]}{\sum\limits_{t=1}^T \mathcal{W}_{1t}} + \frac{\beta^2G^2T}{2L^2\sum\limits_{t=1}^T \mathcal{W}_{1t}}\\
      & \leq \frac{8(T+1)\Delta_{\Phi}}{\frac{2}{5}(T^{5/3} )} + \frac{\eta^2G^2T}{2L^2(\frac{2}{5}T^{5/3})}\\
      & \leq \frac{40\Delta_{\Phi}}{T^{2/3}} + \frac{2\beta^2G^2}{L^2T^{2/3}}
    \end{aligned}
\end{equation}
where  $\beta=28L^2 + \sigma^2/(7 L k^3)$.

\end{proof}

\section{Details of Experiments}

For VOMPS and ACE-STORM, the policy function $\pi$ is parameterized as a diagonal Gaussian distribution where the mean is the output of a two-hidden-layer network (64 hidden units with ReLU) and the standard deviation is fixed. For GeoffPAC, ACE, SVRPG, SRVR-PG, DDPG and TD3, we use the same parameterization as~\cite{geoffpac},~\cite{svrpg},~\cite{sarahpg},~\cite{lillicrap2015continuous} and~\cite{fujimoto2018addressing} respectively.

\paragraph{Cartpole}
\texttt{CartPoleContinuous-v0} has 4 dimensions for a state and 1 dimension for an action. The only difference between \texttt{CartPoleContinuous-v0} and \texttt{CartPole-v0} (provided by OpenAI Gym) is that \texttt{CartPoleContinuous-v0} has a continuous value range of $[-1,1]$ for action space. The episodic return for the comparison with on-policy and off-policy methods is shown in Fig.~\ref{fig:episodic:cartpole:onpol},~\ref{fig:episodic:cartpole:offpol}. The relative performance matches with that of the Monte Carlo return. 
\paragraph{Hopper} \texttt{Hopper-v2} attempts to make a 2D robot hop that has 11 dimensions for a state and 3 dimensions for an action. The episodic return for the comparison with on-policy and off-policy methods is shown in Fig.~\ref{fig:episodic:hopper:onpol},~\ref{fig:episodic:hopper}.
\paragraph{HalfCheetah}
\texttt{HalfCheetah-v2} attempts to make a 2D cheetah robot run that has 17 dimensions for a state and 6 dimensions for an action. The episodic return for the comparison with on-policy and off-policy methods is shown in Fig.~\ref{fig:episodic:halfcheetah:onpol},~\ref{fig:episodic:halfcheetah}.

\begin{figure*}[htb!]
\centering
\begin{subfigure}{.44\textwidth}
  \centering
  \includegraphics[height=3.5cm,width=1.\textwidth]{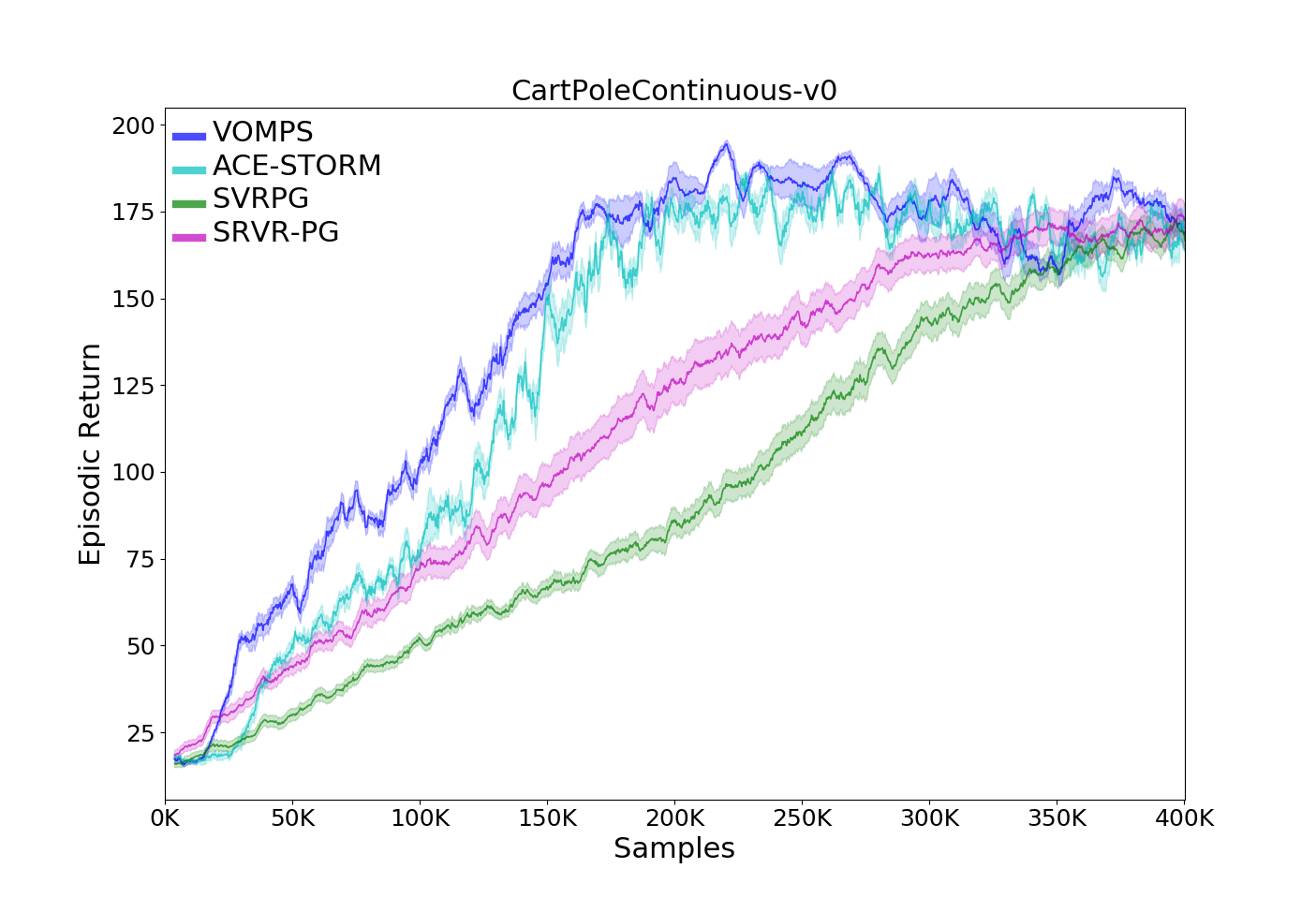}
  \caption{Comparison with on-policy methods}
  \label{fig:episodic:cartpole:onpol}
\end{subfigure}
\begin{subfigure}{.44\textwidth}
  \centering
  \includegraphics[height=3.5cm,width=1.\textwidth]{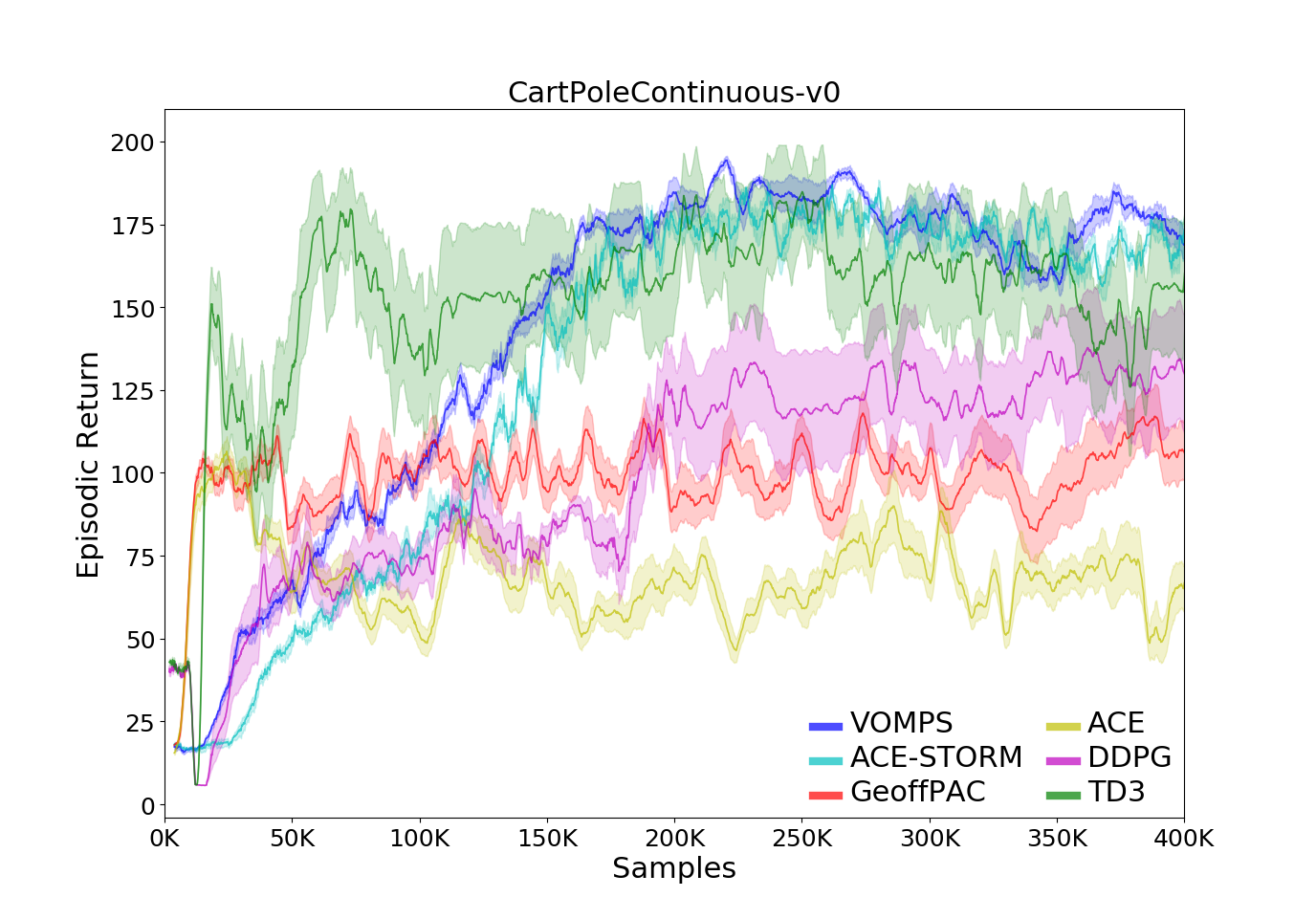}
  \caption{Comparison with off-policy methods}
  \label{fig:episodic:cartpole:offpol}
\end{subfigure}
\caption{Episodic Return on \texttt{CartPoleContinuous-v0}}
\end{figure*}

Besides, the episodic return for the $20\%$ action noise comparison on Mujoco (including \texttt{Hopper-v2} and \texttt{HalfCheetah-v2}) is shown in Fig.~\ref{fig:episodic:hopper:noise:onpol},~\ref{fig:episodic:hopper:noise:offpol},~\ref{fig:episodic:halfcheetah:noise:onpol},~\ref{fig:episodic:halfcheetah:noise:offpol} respectively.

It should be noted that the parameter settings for GeoffPAC and ACE are insensitive on \texttt{CartPoleContinuous-v0}. Therefore, we keep the setting of $\lambda^{(1)}=0.7$, $\lambda^{(2)}=0.6$, $\hat{\gamma}=0.2$ for GeoffPAC, and $\lambda^{(1)}=0$ for ACE in all of the experiments. For DDPG and TD3, we use the same parameter settings as \cite{lillicrap2015continuous} and~\cite{fujimoto2018addressing} respectively.

\begin{footnotesize}
\begin{figure*}[htb!]
\begin{subfigure}{.24\textwidth}
  \centering
  \includegraphics[width = 1\textwidth]{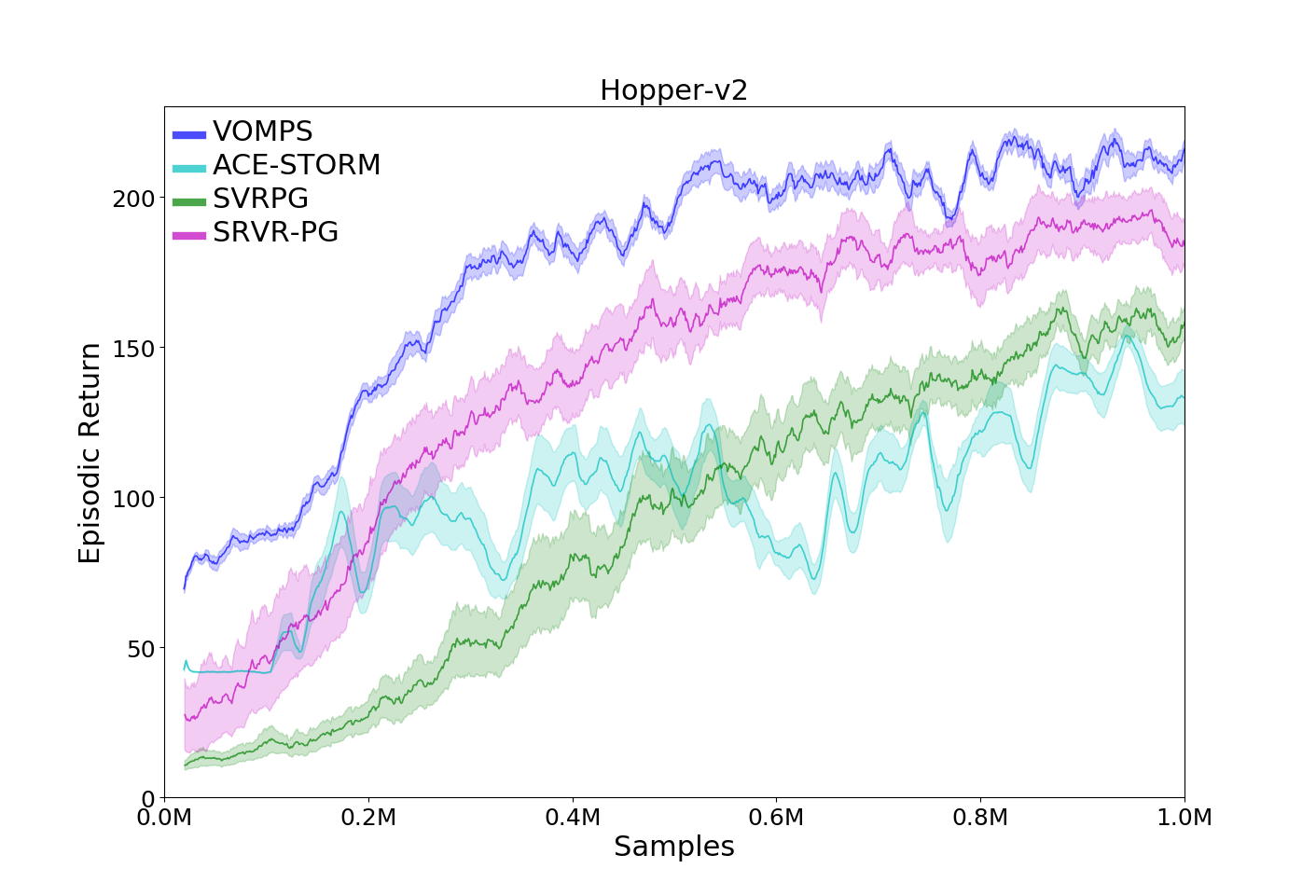}
  \caption{Hopper}
  \label{fig:episodic:hopper:onpol}
\end{subfigure}
\begin{subfigure}{.24\textwidth}
  \centering
  \includegraphics[width = 1\textwidth]{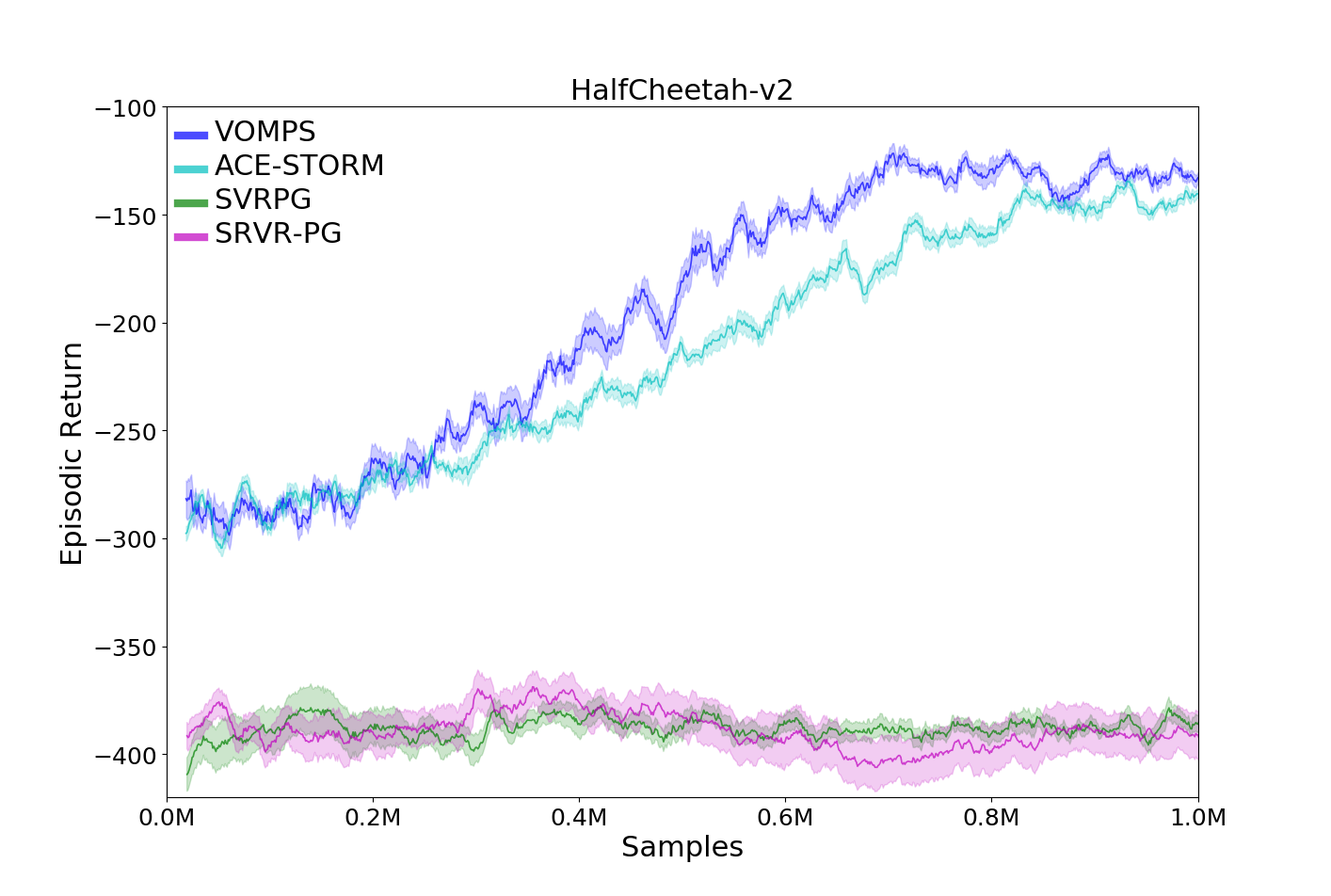}
    \caption{HalfCheetah}
    \label{fig:episodic:halfcheetah:onpol}
\end{subfigure}
\begin{subfigure}{.24\textwidth}
  \centering
  \includegraphics[width = 1\textwidth]{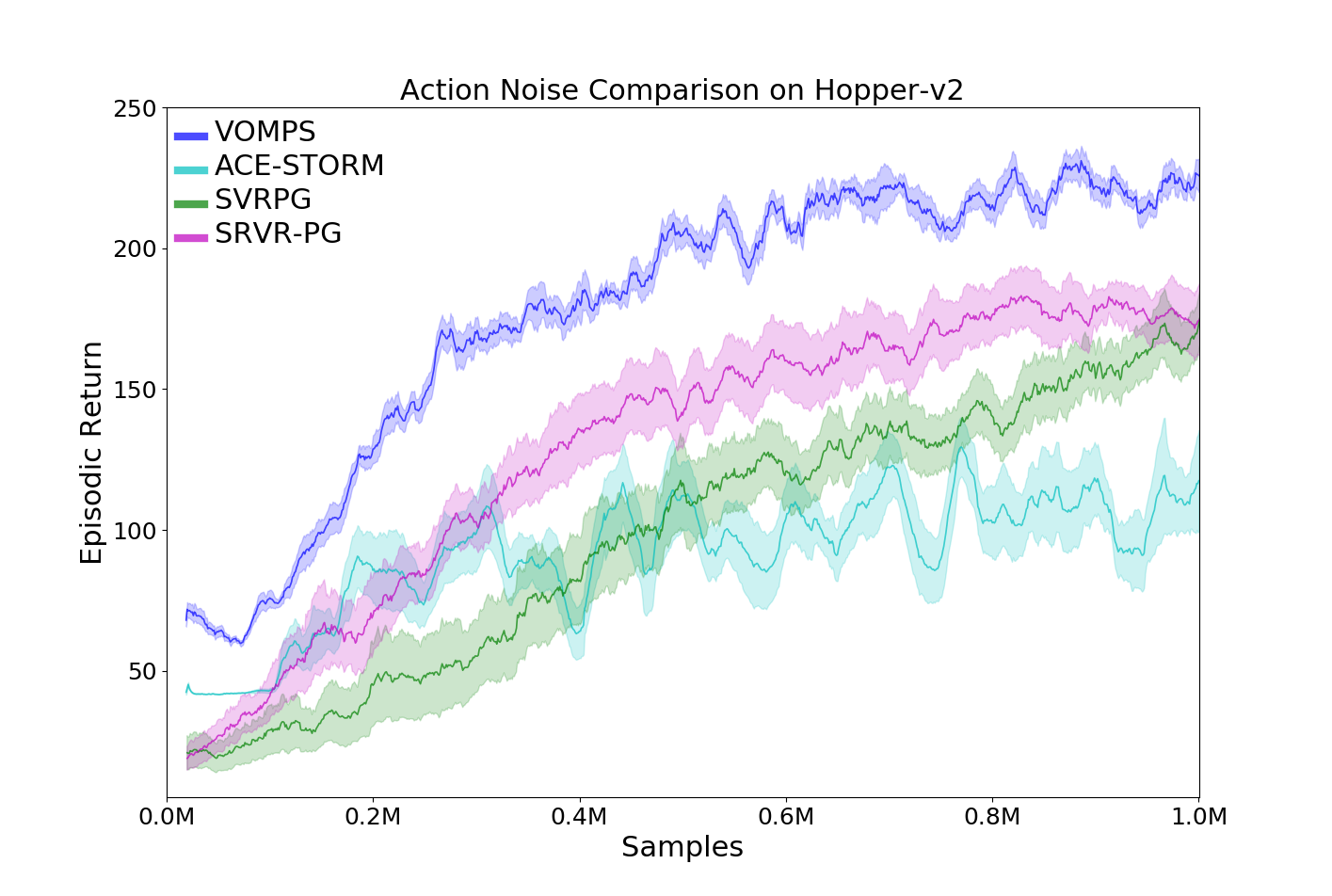}
  \caption{Hopper (action noise)}
  \label{fig:episodic:hopper:noise:onpol}
\end{subfigure}
\begin{subfigure}{.24\textwidth}
  \centering
  \includegraphics[width = 1\textwidth]{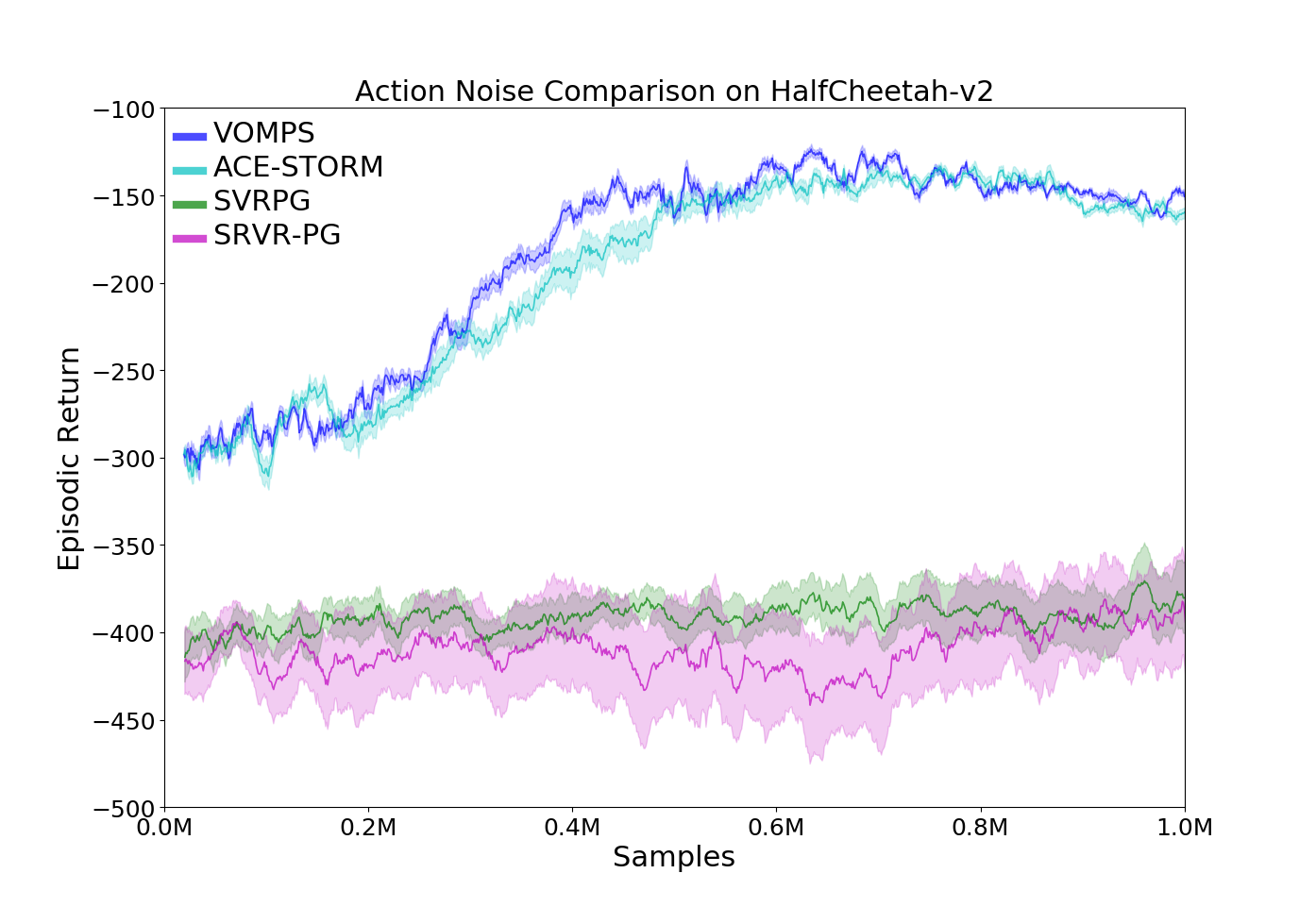}
  \caption{{\footnotesize HC (action noise)}}
  \label{fig:episodic:halfcheetah:noise:onpol}
\end{subfigure}
\caption{Comparison with on-policy PG methods (Mujoco), ``HC'' is short for HalfCheetah.}
\end{figure*}

\begin{figure*}[htb!]
\begin{subfigure}{.24\textwidth}
  \centering
  \includegraphics[width = 1\textwidth]{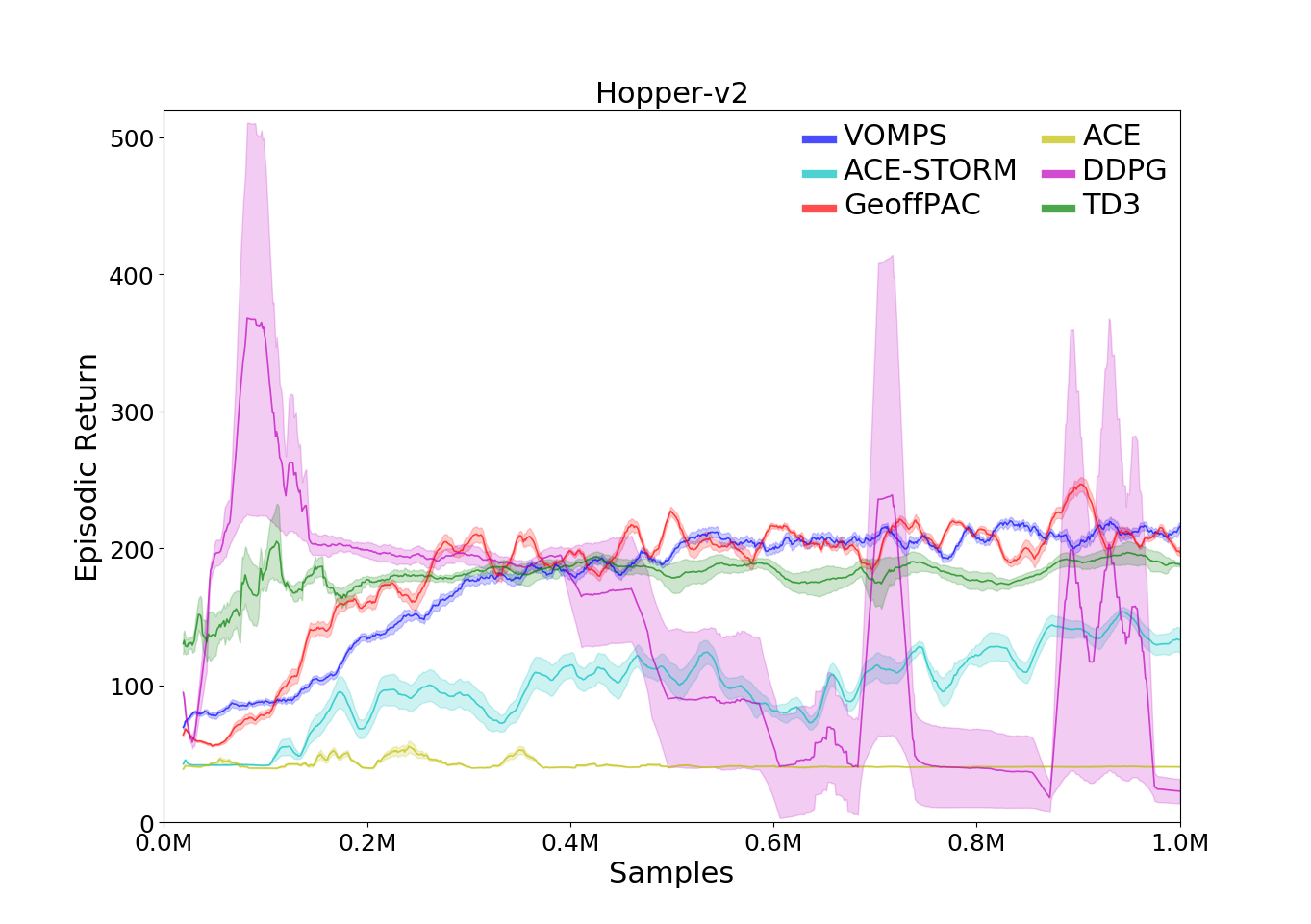}
  \caption{Hopper}
  \label{fig:episodic:hopper}
\end{subfigure}
\begin{subfigure}{.24\textwidth}
  \centering
  \includegraphics[width = 1\textwidth]{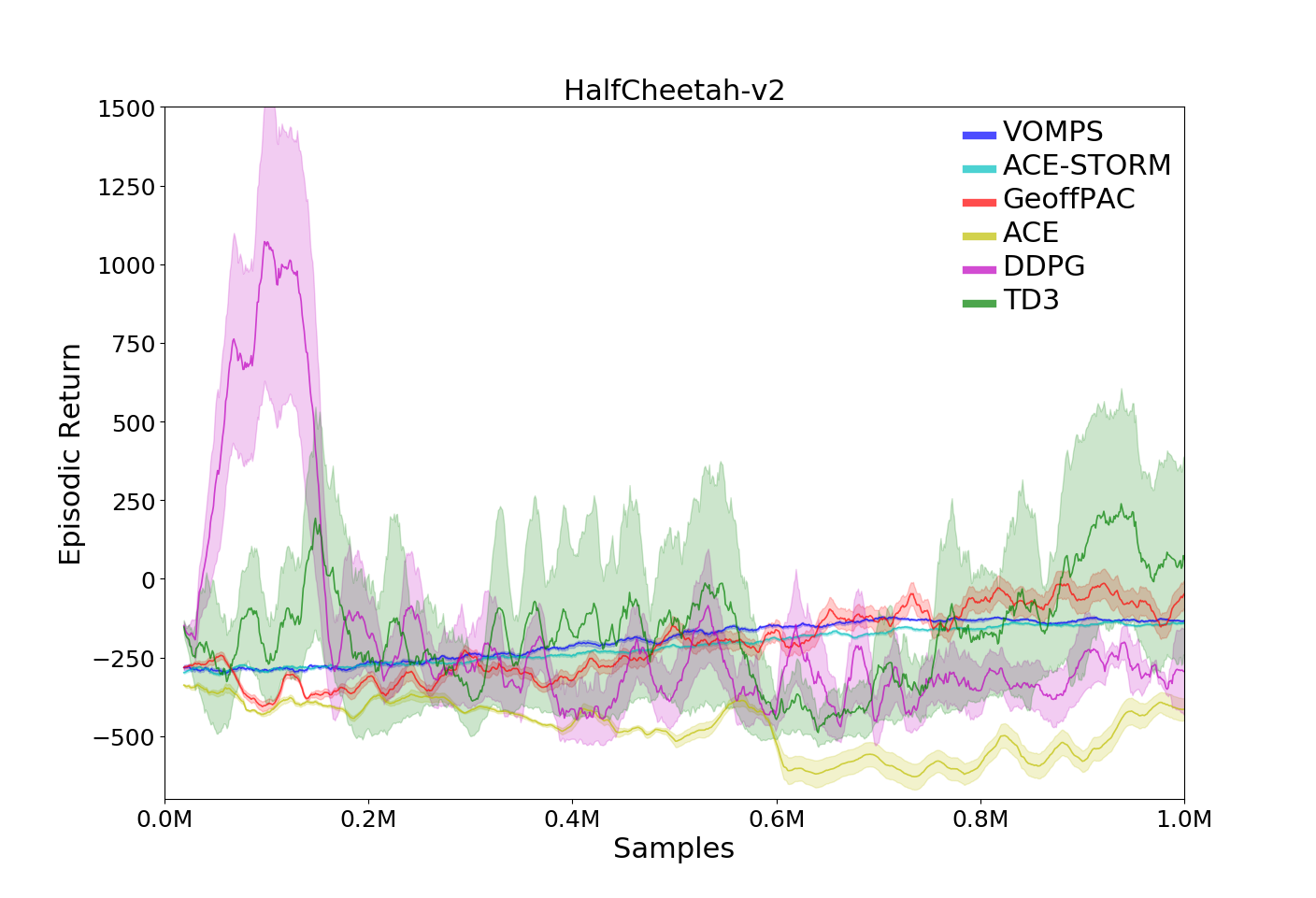}
    \caption{HalfCheetah}
    \label{fig:episodic:halfcheetah}
\end{subfigure}
\begin{subfigure}{.24\textwidth}
  \centering
  \includegraphics[width = 1\textwidth]{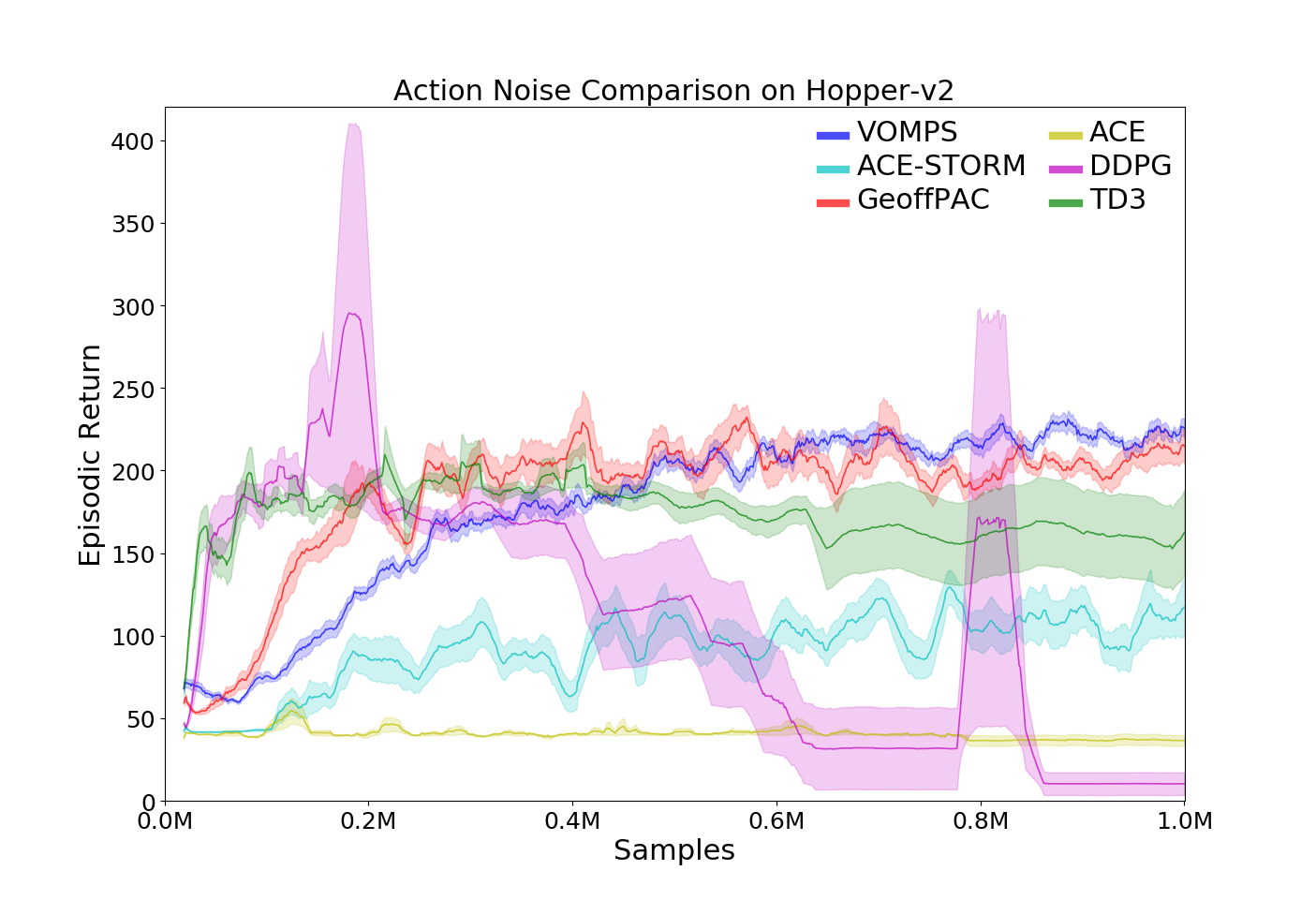}
  \caption{Hopper (action noise)}
  \label{fig:episodic:hopper:noise:offpol}
\end{subfigure}
\begin{subfigure}{.24\textwidth}
  \centering
  \includegraphics[width = 1\textwidth]{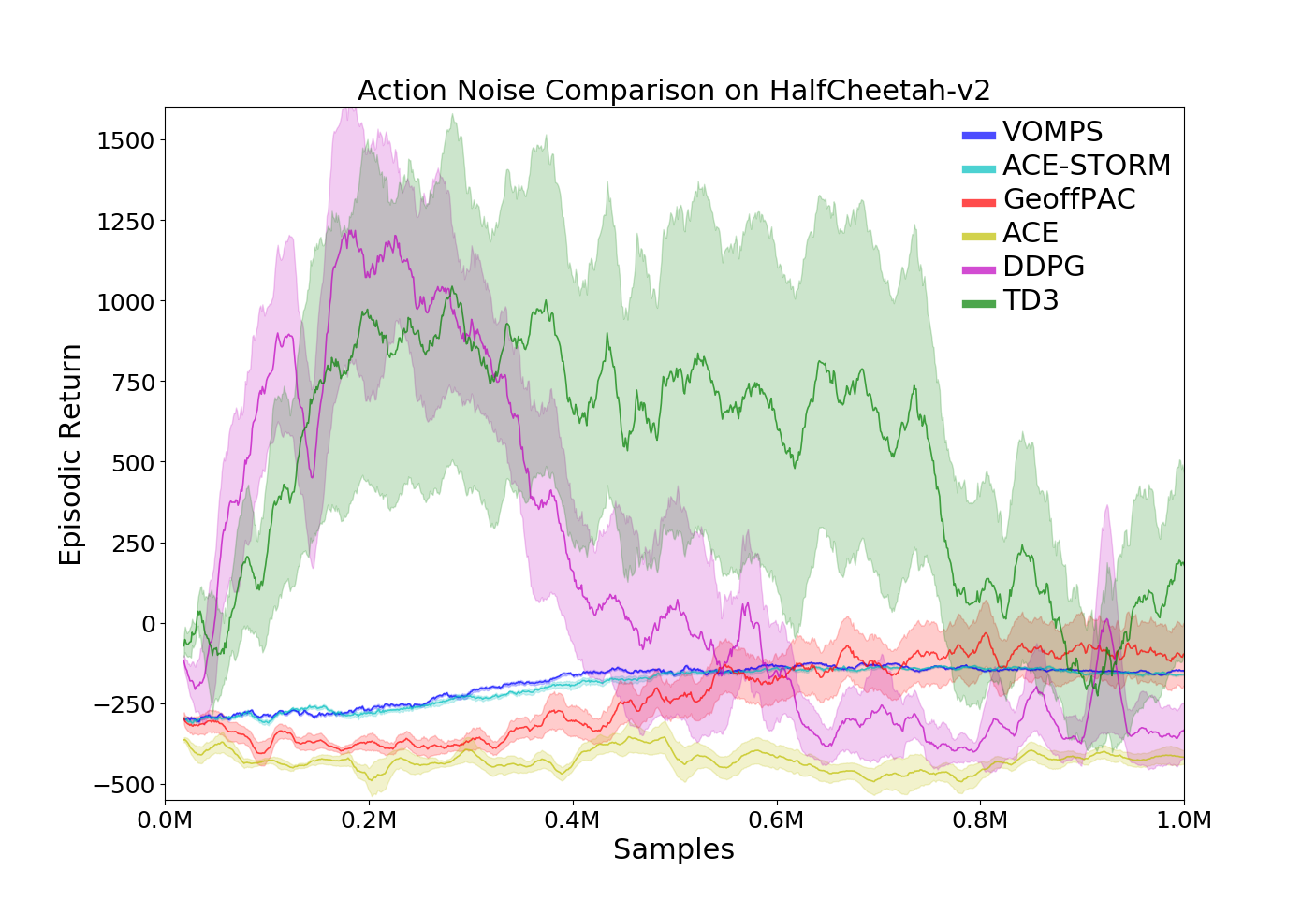}
    \caption{{\footnotesize HC (action noise)}}
  \label{fig:episodic:halfcheetah:noise:offpol}
\end{subfigure}
\caption{Comparison with off-policy PG methods (Mujoco), ``HC'' is short for HalfCheetah.}
\end{figure*}
\end{footnotesize}

\end{document}